\documentclass[11pt]{article}

\usepackage[preprint]{acl}

\usepackage{times}
\usepackage{latexsym}

\usepackage[T1]{fontenc}

\usepackage[utf8]{inputenc}

\usepackage{microtype}

\usepackage{inconsolata}

\usepackage{graphicx}

\usepackage{booktabs}
\usepackage{amsmath}
\usepackage{amsthm}
\usepackage[table]{xcolor}
\usepackage{array}
\definecolor{tableheader}{RGB}{70, 90, 120}
\definecolor{tablesection}{RGB}{230, 236, 245}

\usepackage{CJKutf8}

\usepackage{amssymb}
\usepackage{pifont}
\newcommand{\cmark}{\ding{51}}%
\newcommand{\xmark}{\ding{55}}%

\usepackage{multirow} 

\usepackage{algorithm}
\usepackage{algpseudocode}

\usepackage{booktabs}
\usepackage{multirow}
\usepackage{tabularx}
\usepackage{array}

\newcolumntype{Y}{>{\centering\arraybackslash}X}
\usepackage{xcolor}
\usepackage{graphicx}

\usepackage{pifont}

\usepackage{amsthm}
\newtheorem{proposition}{Proposition}

\title{CORTEX: High-Quality Cross-Domain Organization of Web-Scale Corpora through Ontological Corpus Graph 
}

\author{
  \textbf{Chengtao Gan}\textsuperscript{$\spadesuit$},
  \textbf{Xiaoke Guo}\textsuperscript{$\spadesuit$},
  \textbf{Yushan Zhu}\textsuperscript{$\clubsuit$},
  \textbf{Zhaoyan Gong}\textsuperscript{$\spadesuit$},
\\
  \textbf{Zhiqiang Liu}\textsuperscript{$\spadesuit$},
  \textbf{Songze Li}\textsuperscript{$\spadesuit$},
  \textbf{Huajun Chen}\textsuperscript{$\spadesuit$},
  \textbf{Wen Zhang}\textsuperscript{$\spadesuit$\textdagger}
  \\
  $^{\spadesuit}$Zhejiang University \\
  $^{\clubsuit}$JIUTIAN Research, Beijing, China \\
  \texttt{\{chengtaogan,zhang.wen\}@zju.edu.cn}
}

\begin{document}
\maketitle
\begin{abstract}
The continuous evolution of large language models drives escalating demands on data scale and quality, and as different training stages impose increasingly tailored data requirements, systematic organization of high-quality corpora becomes indispensable. Existing corpus construction pipelines confine the resulting corpora to flat, undifferentiated document collections, universally lacking systematic knowledge organization. We present \textbf{\textsc{Cortex}}, to our knowledge the first framework that elevates web-scale corpus construction from flat document filtering to structured knowledge organization through an \textbf{Ontological Corpus Graph (OCG)}, a three-layer heterogeneous structure unifying a quality-refined content layer, a hierarchical lightweight ontology layer via LLM-driven automated evolution, and a cross-domain alignment layer enabling inter-domain association at arbitrary taxonomic resolution. Comprehensive experiments confirm the effectiveness of \textsc{Cortex}. In particular, we leverage the OCG to synthesize \textsc{CortexBench}, a cross-domain search-and-reasoning benchmark whose evaluation across eight frontier LLMs validates the effectiveness of quality refinement, domain organization, and cross-domain data synthesis. We will publicly release the complete codebase, a 24.14\,B-token refined corpus with its OCG, and \textsc{CortexBench}.
\end{abstract}

\section{Introduction}
\label{sec:introduction}

\begingroup
  \renewcommand{\thefootnote}{\fnsymbol{footnote}}
  \setcounter{footnote}{0}
  \footnotetext{\textsuperscript{\textdagger} Corresponding author}
\endgroup

Large language model (LLM) performance exhibits empirical power-law
scaling with both model size and training data
volume~\citep{scallinglaw,TrainCompute}, driving an ever-growing demand
for massive training corpora. Web corpora such as Common
Crawl~\citep{commoncrawl} have become the dominant source for
trillion-token-scale pretraining data; however, raw web text is
inherently noisy, containing substantial low-quality, redundant, or
harmful content. Recent studies further demonstrate that data quality
critically governs scaling efficiency, with noisy data degrading the
effective contribution of each additional training
token~\citep{FineWeb,DCLM}. The central challenge has thus shifted
from merely accumulating data to \emph{systematically distilling
high-quality content} from massive-scale noisy web sources.

Existing corpus construction pipelines have evolved from
rule-based heuristic filters~\citep{C4} to classifier-based quality
scoring~\citep{FineWeb,DCLM}; however, these methods typically
evaluate along a single quality dimension, whereas text quality
inherently demands multi-dimensional assessment. While frontier LLMs
can provide such assessment comparable to human
annotators~\citep{ChatGPTOutperforms}, deploying them at
trillion-token scale remains prohibitively expensive. Beyond quality,
existing pipelines universally lack \emph{systematic knowledge
organization}, uniformly producing flat document collections without
domain taxonomy or inter-domain relationship modeling
(Table~\ref{tab:comparison}), precluding fine-grained domain-specific
extraction and cross-domain association exploitation. These gaps
underscore the pressing need for frameworks that jointly address
quality refinement and knowledge organization. High-quality curated
corpora also remain highly imbalanced across
languages~\citep{LinguisticDiversity,CCI3HQ,ChineseWebText}, with
many widely spoken languages facing scarcity, further
motivating language-agnostic solutions.

To jointly address these challenges, we present
\textsc{\textbf{Cortex}} (\textbf{C}urated \textbf{O}ntological
\textbf{R}efinement and \textbf{T}iered Graph-\textbf{E}nabled
Cross-domain Ne\textbf{X}us), a framework applicable to any target
language that refines large-scale raw corpora and organizes the
resulting high-quality content with an
\textbf{\emph{Ontological Corpus Graph} (OCG)}, a three-layer
heterogeneous structure unifying content refinement, lightweight
ontology construction, and cross-domain association modeling
(Figure~\ref{fig:framework}). \textbf{(1)~The \emph{High Quality
Content Layer}} is constructed through knowledge distillation-based
quality assessment; we design an
\textit{\emph{Ordinal-Aware Regression} (OAR)} loss to distill
teacher LLMs' multi-dimensional scoring capabilities into a
0.3\,B-parameter student with over $1{,}000\!\times$ compression,
enabling reliable quality assessment at trillion-token scale.
\textbf{(2)~The \emph{Lightweight Ontology Layer}} provides a
hierarchical lightweight ontology constructed through LLM-driven
automated evolution, and \textbf{(3)~the \emph{Alignment Layer}}
bridges content and the Lightweight Ontology via
Typicality--Fidelity--Specificity (TFS) weighted linkages, enabling
inter-domain association analysis and domain-aware retrieval at
arbitrary taxonomic resolution. 

We validate \textsc{Cortex} through three complementary experiments:
(i)~knowledge distillation pipeline validation with corpus and OCG
distributional analysis, (ii)~continual pre-training across four seed
models on OCG-organized domain-specific and neighbor-chain data, and
(iii)~OCG-driven synthesis of \textsc{CortexBench}, a cross-domain
search-and-reasoning QA benchmark evaluated on eight frontier LLMs
under a stable protocol. Together, these experiments
validate the effectiveness of quality refinement, domain
organization, inter-domain association modeling, and cross-domain
data synthesis.

Our main contributions are as follows:
\begin{itemize}
\item We propose \textsc{\textbf{Cortex}}, to our knowledge the first
  web-scale corpus construction framework to unify content quality
  refinement, lightweight ontology construction, and cross-domain
  association modeling via an \textbf{\emph{Ontological Corpus Graph}
  (OCG)}.
\item We validate \textsc{Cortex} through continual pre-training
  across four seed models and further design an OCG-driven data synthesis method and apply it to construct
  \textsc{CortexBench}; evaluation on eight frontier LLMs confirms the
  high difficulty and discriminative power of the synthesized
  instances, jointly validating the corpus quality, OCG-based data
  organization, and synthesis methodology.
\item We will publicly release the complete \textsc{Cortex} codebase,
  the trained scoring models, a 24.14\,B-token refined 
corpus with its accompanying OCG, and
  \textsc{CortexBench}.
\end{itemize}

\section{Related Work}
\label{sec:RelatedWork}

\paragraph{Corpus Quality}
Early pipelines such as C4~\citep{C4} rely on rule-based heuristic
filters with quality signals confined to syntactic
properties~\citep{FinerWeb-10BT}. Subsequent work introduces
model-based quality classification~\citep{FineWeb,DCLM}, and
Nemotron-CC~\citep{Nemotron-CC} deploys classifiers for quality,
topic, and synthetic content detection.
QuRating~\citep{QuRating} collects pairwise LLM judgments across
four quality criteria.
While frontier LLMs have demonstrated assessment quality comparable
to human annotators~\citep{ChatGPTOutperforms}, deploying them
directly at trillion-token scale remains prohibitively expensive.
Knowledge distillation~\citep{Distilling} offers a pathway to
compress such capabilities into lightweight models.

\paragraph{Knowledge Organization}
Existing large-scale corpus construction methods focus predominantly
on filtering mechanism
design~\citep{FineWeb,DCLM,RefinedWeb,Nemotron-CC}, producing flat
document collections without knowledge organization
(Table~\ref{tab:comparison}). Automated taxonomy construction has
evolved from corpus-driven term
clustering~\citep{TaxoGen} to LLM-based methods such as
Chain-of-Layer~\citep{Chain-of-Layer} and
TaxoAdapt~\citep{TaxoAdapt}, but these remain limited to
domain-specific settings.

\paragraph{Cross-Domain Synthesis}
LLM-based data synthesis is effective for training and evaluation
resource construction~\citep{SelfInstruct,Phi}, but quality
critically depends on source
documents~\citep{Source2Synth}. Existing methods primarily operate
on individual documents without exploiting cross-domain
relationships. Cross-document reasoning benchmarks such as
HotpotQA~\citep{HotpotQA} and MuSiQue~\citep{MuSiQue} advance
multi-passage reasoning but lack cross-domain customization, and web
search evaluation~\citep{BrowseComp} lacks the stability required
for controlled assessment~\citep{BrowseComp-Plus}.

\section{Methodology}
\label{sec:methodology}

We present \textsc{Cortex} (\textbf{C}urated \textbf{O}ntological \textbf{R}efinement and \textbf{T}iered Graph-\textbf{E}nabled Cross-domain Ne\textbf{X}us), 
a framework that distills
high-quality corpora from web-scale corpora and organizes them
with an \textbf{\emph{Ontological Corpus Graph} (OCG)}. As shown in
Figure~\ref{fig:framework}, the OCG is a three-layer heterogeneous
structure formally defined as
\begin{equation}
  \mathcal{G}
    = \bigl\langle\,\mathcal{L}_C,\;\mathcal{L}_O,\;
      \mathcal{L}_A\,\bigr\rangle.
  \label{eq:ocg}
\end{equation}

\begin{figure*}[t]
  \centering
  \includegraphics[width=\textwidth]{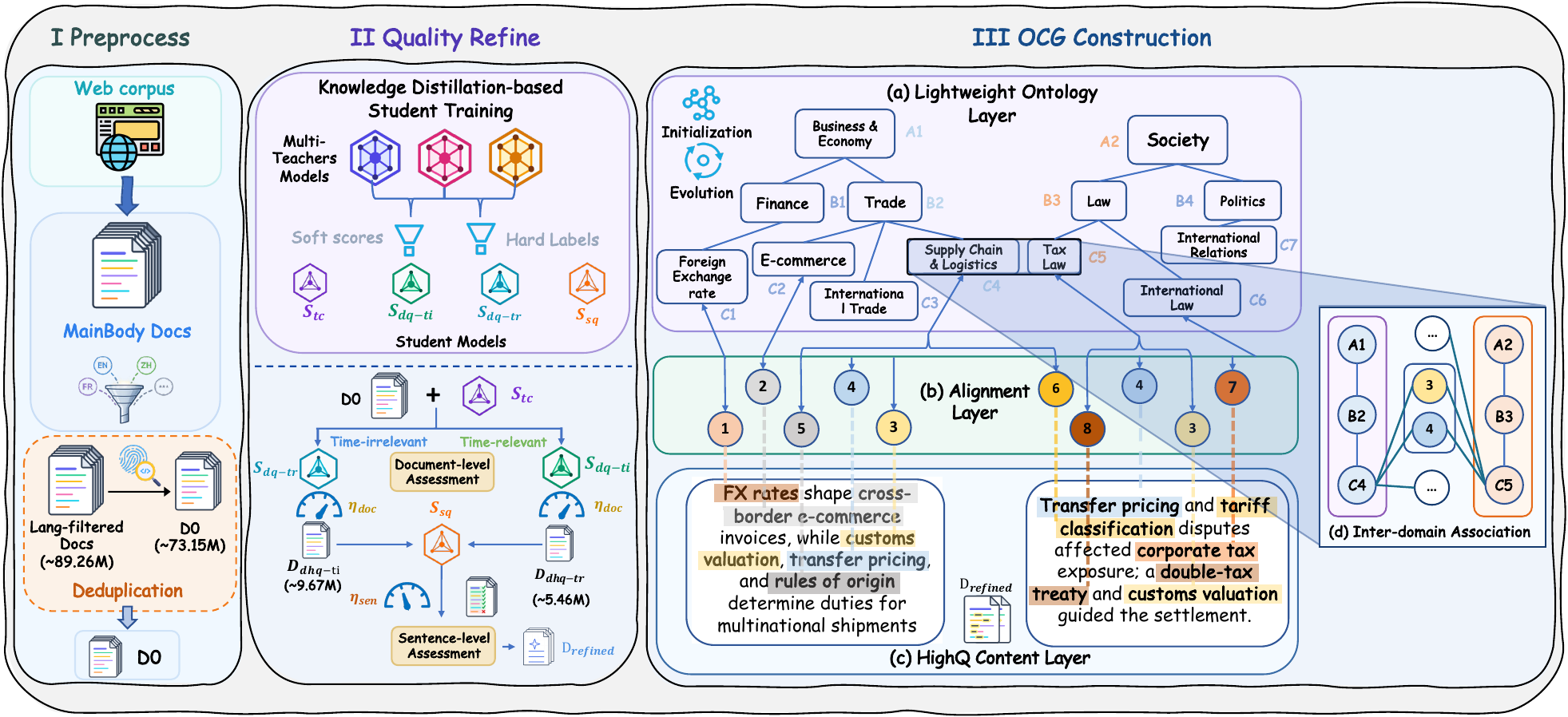}
  \caption{
    Overview of the \textsc{Cortex} framework.
  }
  \label{fig:framework}
\end{figure*}

The \textbf{High Quality Content Layer}
$\mathcal{L}_C = \mathcal{D}_{\text{refined}}$
(Stage~III\,(c) in Figure~\ref{fig:framework};
\S\ref{sec:content_layer}) is a quality-refined document corpus
constructed through multi-stage preprocessing and knowledge
distillation-based quality assessment (Stages~I--II).
The \textbf{Lightweight Ontology Layer}
$\mathcal{L}_O = \mathcal{C}$
(Stage~III\,(a) in Figure~\ref{fig:framework};
\S\ref{sec:ontology_layer}) is a hierarchical Concept Chain Set
constructed via LLM-driven automated evolution.
The \textbf{Alignment Layer}
$\mathcal{L}_A = (\mathcal{K},\,\Phi,\,W)$
(Stage~III\,(b) in Figure~\ref{fig:framework};
\S\ref{sec:alignment_layer}) bridges $\mathcal{L}_C$ and
$\mathcal{L}_O$, enabling inter-domain association analysis at
arbitrary taxonomic resolution. Table~\ref{tab:comparison} compares
\textsc{Cortex} with representative pipelines.

\subsection{High Quality Content Layer}
\label{sec:content_layer}
\label{sec:kdqa}

The High Quality Content Layer $\mathcal{L}_C$ is constructed through
two stages. Stage~I (\S\ref{sec:preprocessing}) converts raw WARC
records from Common Crawl into a preprocessed
corpus~$\mathcal{D}_0$. Stage~II refines $\mathcal{D}_0$ into
$\mathcal{L}_C = \mathcal{D}_{\text{refined}}$ via knowledge
distillation-based quality assessment
(\S\ref{sec:teacher}--\ref{sec:sen_qa}).

\subsubsection{Data Acquisition and Preprocessing}
\label{sec:preprocessing}

As shown in Stage~I of Figure~\ref{fig:framework}, this stage
converts raw WARC records into the preprocessed corpus
$\mathcal{D}_0$ through three standard steps with two key
distinctions from prior pipelines. First, in addition to
Resiliparse~\cite{Resiliparse,resiliparse2} for high-throughput body
text extraction~\cite{DCLM}, we employ
Trafilatura~\citep{Trafilatura} for structured metadata extraction
(e.g., publication date), which is not retained by existing pipelines
but is leveraged by our temporal relevance classifier
(\S\ref{sec:teacher}). Second, our three-stage coarse-to-fine
language filtering incorporates the document-level quality scorer
(\S\ref{sec:doc_qa}) as an implicit final language purity check.
Global exact deduplication follows snapshot-wide
practices~\citep{FineWeb,Nemotron-CC}. Full preprocessing details
are in Appendices~\ref{app:preprocessing_details}--\ref{app:dedup}.

\begin{table}[t]
\centering
\small
\setlength{\tabcolsep}{2.5pt}
\renewcommand{\arraystretch}{1.08}
\begin{tabular}{@{}l@{\hspace{4pt}}ccccc@{}}
\toprule
\textbf{Procedure}
  & \textbf{\textsc{Cortex}}
  & \textbf{FineW}
  & \textbf{DCLM}
  & \textbf{C4}
  & \textbf{Ne-CC} \\
\midrule
\multicolumn{6}{@{}l}{\textbf{\textit{Preprocessing}}} \\
\quad HTML-to-Text        & \cmark & \cmark  & \cmark  & $\circ$ & \cmark \\
\quad Body extraction     & \cmark & \cmark  & \cmark  & \xmark  & \cmark \\
\quad Metadata extr.      & \cmark & \xmark  & \xmark  & \xmark  & \xmark \\
\quad Language ID         & \cmark & \cmark  & \cmark  & \cmark  & \cmark \\
\quad Deduplication       & \cmark & \cmark  & \cmark  & \cmark  & \cmark \\
\midrule
\multicolumn{6}{@{}l}{\textbf{\textit{Quality Assessment}}} \\
\quad Causal reasoning    & \cmark & \xmark  & \xmark  & \xmark  & \xmark  \\
\quad Q\&A / explan.      & \cmark & \xmark  & $\circ$ & \xmark  & $\circ$ \\
\quad Factual accuracy    & \cmark & $\circ$ & \xmark  & \xmark  & $\circ$ \\
\quad Expression qua.  & \cmark & $\circ$ & \xmark  & $\circ$ & $\circ$ \\
\quad Info. density       & \cmark & $\circ$ & \xmark  & \xmark  & $\circ$ \\
\quad Harmful content     & \cmark & $\circ$ & $\circ$ & \cmark  & $\circ$ \\
\quad Temporal  & \cmark & \xmark  & \xmark  & \xmark  & \xmark  \\
\midrule
\multicolumn{6}{@{}l}{\textbf{\textit{Cross-domain Modeling}}} \\
\quad Domain class.       & OCG    & \xmark  & \xmark  & \xmark  & \xmark \\
\quad Inter-domain ass. & OCG    & \xmark  & \xmark  & \xmark  & \xmark \\
\bottomrule
\multicolumn{6}{@{}p{0.95\linewidth}@{}}{\scriptsize
  \cmark\;explicitly addressed;\quad
  $\circ$\;partially captured;\quad
  \xmark\;not addressed.} \\
\end{tabular}
\caption{Comparison of \textsc{Cortex} with representative web corpus
pipelines.
Unabbreviated version with full row
descriptions is in Appendix~\ref{app:comparison_table}.}
\label{tab:comparison}
\end{table}

\subsubsection{Multi-Teacher Annotation}
\label{sec:teacher}

For each scoring task, a stratified sample
$\mathcal{D}_{\text{anno}} \subset \mathcal{D}_0$ is drawn and annotated
by an ensemble of frontier LLMs
$\mathcal{M}=\{\mathcal{M}_1,\ldots,\mathcal{M}_{|\mathcal{M}|}\}$
serving as teachers (specific model selection are detailed
in Appendix~\ref{app:teacher_models}). For each sample
$x_n \in \mathcal{D}_{\text{anno}}$, each teacher $\mathcal{M}_m$
independently produces a quality score, and the consensus soft label is
$y_n = \frac{1}{|\mathcal{M}|}\sum_{m=1}^{|\mathcal{M}|}
\mathcal{M}_m(x_n) \in [0,1]$.
This continuous score is then mapped to a hard label via fixed thresholds:
\textsc{LowQ} for $y_n\in[0,\,0.4]$, \textsc{MidQ} for
$y_n\in(0.4,\,0.7)$, and \textsc{HighQ} for $y_n\in[0.7,\,1.0]$.

In contrast to existing pipelines that assess quality along a single
axis~\citep{FineWeb}, our rubric integrates multiple quality
dimensions, including causal reasoning, factual accuracy, expression
quality, and harmful content filtering, into a unified All-in-One
assessment (Table~\ref{tab:comparison}; full rubric in
Appendix~\ref{app:prompts}). Teachers additionally produce a binary
\emph{temporal relevance} label ($\textsc{Time\text{-}Rel}$ or
$\textsc{Time\text{-}Irrel}$), indicating whether the content is
time-sensitive or time-invariant.

\subsubsection{Ordinal-Aware Regression (OAR) Loss}
\label{sec:oar_loss}

Training the student to predict quality scores $\hat{y}_n \in [0,1]$
requires simultaneously learning the teacher's continuous scoring
standard and its discrete classification boundaries, as purely
regression-based or classification-based training each sacrifices one
of these objectives. Inspired by ordinal regression
methods~\cite{CORAL,CORN}, we propose the
\textbf{Ordinal-Aware Regression (OAR)} loss, which jointly optimizes
a robust regression objective and a soft ordinal boundary objective:
\begin{equation}
  \mathcal{L}_{\text{OAR}}
    = \lambda_{\text{reg}}\,\mathcal{L}_{\text{reg}}
    + \lambda_{\text{ord}}\,\mathcal{L}_{\text{ord}}.
  \label{eq:total_loss}
\end{equation}

Both loss components incorporate adaptive sample weights $w_n$ that
emphasize boundary-adjacent and target-class samples; the full
weighting scheme is detailed in Appendix~\ref{app:sample_weighting}.

\paragraph{Weighted Huber Regression Loss.}
To robustly fit the teacher's continuous scores,
we employ a weighted Huber loss~\cite{huberloss}:
$\mathcal{L}_{\text{reg}}
    = \tfrac{1}{N}\sum_{n=1}^{N}w_n
      \operatorname{Huber}_\delta(\hat{y}_n - y_n)$,
where $\operatorname{Huber}_\delta(e)=\tfrac{1}{2}e^2$ if
$|e|\le\delta$ and $\delta(|e|-\tfrac{1}{2}\delta)$ otherwise.

\paragraph{Soft-Threshold Ordinal Boundary Loss.}
Let $t_1<t_2$ denote the two ordinal boundaries. We define a
temperature-scaled boundary sigmoid
$c_j(s;\,\tau') = \sigma\bigl((t_j - s)/\tau'\bigr)$ for
$j\in\{1,2\}$, where $\tau'{>}0$ controls boundary sharpness,
inducing a smooth three-class distribution with components
$q_{\text{L}}{=}c_1$,\;
$q_{\text{M}}{=}c_2{-}c_1$,\;
$q_{\text{H}}{=}1{-}c_2$.
We construct the \emph{predicted} distribution
$\hat{\mathbf{p}}_n =
[\hat{c}_1,\;\hat{c}_2{-}\hat{c}_1,\;1{-}\hat{c}_2]$ with
$\hat{c}_j = c_j(\hat{y}_n;\tau)$, and the \emph{soft target}
$\mathbf{p}_n^* = [c_1^*,\;c_2^*{-}c_1^*,\;1{-}c_2^*]$ with
$c_j^* = c_j(y_n;\tau^*)$. The ordinal boundary loss is:
\begin{equation}
  \mathcal{L}_{\text{ord}}
    = -\frac{1}{N}\!\sum_{n=1}^{N} w_n
      \!\!\sum_{c\in\{\text{L,M,H}\}}\!\!
      p_{n,c}^*\log\hat{p}_{n,c}.
  \label{eq:ord_loss}
\end{equation}
Further details are in Appendix~\ref{app:oar_temperature}.

\subsubsection{Knowledge Distillation-Based Student Training and
Assessment}
\label{sec:student_train}
\label{sec:doc_qa}
\label{sec:sen_qa}

We train lightweight student models $\mathcal{S}_\theta$ to replicate
the teacher ensemble's scoring behavior, producing
$\hat{y}_n = \mathcal{S}_\theta(x_n) \in [0,1]$
(Appendix~\ref{app:student_arch}).
Given the teacher-annotated set $\{(x_n,y_n)\}_{n=1}^{N}$, the
student is optimized by:
\begin{equation}
  \theta^* = \arg\min_\theta \frac{1}{N}\sum_{n=1}^{N}
    \ell\bigl(\mathcal{S}_\theta(x_n),\,y_n\bigr),
  \label{eq:training_obj}
\end{equation}
where $\ell = \mathcal{L}_{\text{OAR}}$ (Eq.~\ref{eq:total_loss}) for
quality scoring and $\ell = \mathcal{L}_{\text{BCE}}$ for temporal
classification. We train four models: a temporal classifier
$\mathcal{S}_{\text{tc}}^*$, document-level quality scorers
$\mathcal{S}_{\text{dq-tr}}^*$ (\textsc{Time\text{-}Rel}) and
$\mathcal{S}_{\text{dq-ti}}^*$ (\textsc{Time\text{-}Irrel}), and a
sentence-level scorer $\mathcal{S}_{\text{sq}}^*$. Post-hoc decision
thresholds $\eta_{\text{doc}}$ and $\eta_{\text{sen}}$ are calibrated
on held-out sets (Appendix~\ref{app:threshold}). Further details are
in Appendix~\ref{app:training}.

The trained models are applied sequentially to $\mathcal{D}_0$
(Stage~II in Figure~\ref{fig:framework}).
$\mathcal{S}_{\text{tc}}^*$ first partitions documents into
$\mathcal{D}_0^{\text{TR}}$ and $\mathcal{D}_0^{\text{TI}}$. Each
subset is scored by its dedicated quality scorer; documents exceeding
$\eta_{\text{doc}}$ form $\mathcal{D}_{\text{dhq}}$. Each
$d\in\mathcal{D}_{\text{dhq}}$ is segmented into sentences
$\{s_1,\ldots,s_{N_d}\}$
(Appendix~\ref{app:inference_details}); $\mathcal{S}_{\text{sq}}^*$
retains the subset
$\mathcal{S}_d^{+} = \{s_i
    \mid \hat{y}_{\text{sq}}(s_i)\ge\eta_{\text{sen}}\}$,
whose sentences are concatenated in original order to produce
$\tilde{d}$.
The refined corpus
$\mathcal{D}_{\text{refined}}
    = \{\tilde{d} \mid d \in \mathcal{D}_{\text{dhq}},\;
      \mathcal{S}_d^{+} \neq \emptyset\}$
completes $\mathcal{L}_C$.

\subsection{Lightweight Ontology Layer}
\label{sec:ontology_layer}
\label{sec:concept_chain}

The Lightweight Ontology Layer
$\mathcal{L}_O = \mathcal{C}$ provides a multi-resolution hierarchical
domain taxonomy for organizing the refined corpus
(Stage~III\,(a) in Figure~\ref{fig:framework}).
further details are in
Appendix~\ref{app:concept_hierarchy}.

\subsubsection{Initialization and Evolution}
\label{sec:concept_init}

\paragraph{Concept Chain Set Initialization.}
The initial Concept Chain Set $\mathcal{C}^{(0)}$ is bootstrapped from
Wikipedia's portal and manually reviewed (Appendix~\ref{app:concept_hierarchy}). It comprises 196
leaf-level concept chains spanning 12 top-level domains.

\paragraph{LLM-Driven Automated Evolution.}
To maximize the Concept Chain Set's coverage of the domains present in
$\mathcal{D}_{\text{refined}}$, we introduce an automated evolution
procedure. A small random subset
$\mathcal{D}_{\text{evo}} \subset \mathcal{D}_{\text{refined}}$ is
sampled to drive the evolution. Documents in $\mathcal{D}_{\text{evo}}$
are processed sequentially; at each step $t$, a frontier LLM
$\mathcal{M}_{\text{evo}}$ takes document $d_t$ and the current Concept
Chain Set $\mathcal{C}^{(t-1)}$ as input:
\begin{equation}
  \mathcal{M}_{\text{evo}}\bigl(d_t,\;\mathcal{C}^{(t-1)}\bigr)
    \;\to\;
    \bigl\{(c_i,\;\gamma_i)\bigr\}_{i=1}^{n_t},
  \label{eq:evolution}
\end{equation}
producing $n_t \in \{1,2,3\}$ chain--confidence pairs with
$\gamma_i \in [0,1]$. Two outcomes are possible:
\textbf{(a)}~when at least one existing chain achieves confidence
$\gamma_i \ge \tau_{\text{conf}}$, all returned chains satisfy
$c_i \in \mathcal{C}^{(t-1)}$ and the set remains unchanged:
$\mathcal{C}^{(t)} = \mathcal{C}^{(t-1)}$;
\textbf{(b)}~when no existing chain reaches $\tau_{\text{conf}}$, the LLM
proposes a candidate new chain
$c_{\text{new}} \notin \mathcal{C}^{(t-1)}$ subject to structural
constraints (Appendix~\ref{app:evolution_constraints}), which is
\emph{directly incorporated} into the set:
$\mathcal{C}^{(t)} = \mathcal{C}^{(t-1)} \cup \{c_{\text{new}}\}$.
Newly added chains immediately become available for subsequent matching,
enabling the taxonomy to grow incrementally throughout the evolution
process.

\subsubsection{Convergence Monitoring and Post-Evolution Admission
Control}
\label{sec:concept_conv}

Convergence is monitored via a sliding-window update rate
(Appendix~\ref{app:convergence}); evolution terminates when no
proposal occurs for $P$ consecutive documents. A post-hoc
\emph{admission control} step then filters candidate chains
$\mathcal{C}_{\text{cand}} =
  \mathcal{C}^{(T_{\text{stop}})} \setminus \mathcal{C}^{(0)}$,
where $T_{\text{stop}}$ is the termination step. Each candidate
$c \in \mathcal{C}_{\text{cand}}$ is scored as
$\operatorname{Score}(c)
    = \log_{10}(N_{\text{hit}}{+}1)
      \cdot(w_1\bar{C}+w_2 P_{\text{high}})$,
where $N_{\text{hit}}$ is the number of documents matched to $c$, $\bar{C}$ the average matching confidence,
$P_{\text{high}}$ the fraction with confidence
$>\tau_{\text{conf}}$, and $w_1,w_2$ are balancing weights. The
finalized Lightweight Ontology Layer is
$\mathcal{L}_O = \mathcal{C} = \mathcal{C}^{(0)} \cup
  \{c \in \mathcal{C}_{\text{cand}} \mid
    \operatorname{Score}(c) \ge \tau_{\text{admit}}\}$,
containing \textbf{403~leaf-level concept chains}
(Appendix~\ref{app:hyperparams}).

\subsection{Alignment Layer}
\label{sec:alignment_layer}
\label{sec:ocg}

The Alignment Layer
$\mathcal{L}_A = (\mathcal{K},\,\Phi,\,W)$ bridges $\mathcal{L}_C$ and
$\mathcal{L}_O$ by establishing keyword-mediated linkages between
documents and concept chains
(Stage~III\,(b) in Figure~\ref{fig:framework}).

\subsubsection{Document--Concept Association}
\label{sec:doc_concept}

Each document in $\mathcal{D}_{\text{refined}}$ is associated with the
most relevant concept chains from $\mathcal{C}$. Each chain $c$ is
first expanded into a natural-language description $\mathcal{E}(c)$
(Appendix~\ref{app:expansion_prompt}). Let $\mathbf{v}_d$ and
$\mathbf{v}_c$ denote the embeddings of document $d$ and
$\mathcal{E}(c)$, respectively. The association confidence is
$\alpha(d,c) = \cos(\mathbf{v}_d,\;\mathbf{v}_c)$, and we retain the
top-$M$ ($M{=}3$) chains for each document:
$\mathcal{C}_d = \operatorname{top\text{-}}M_{c\in\mathcal{C}}
\alpha(d,c)$. Further details are in
Appendix~\ref{app:hyperparams}.

\subsubsection{Keyword Extraction and Linking}
\label{sec:keyword}

For each document in $\mathcal{D}_{\text{refined}}$, we extract the
top-$T$ ($T{=}5$) keywords
(Appendix~\ref{app:keyword_details}). Each keyword $k$ in document
$d$ is then linked to its single most relevant concept chain among
$\mathcal{C}_d$:
$\phi_d(k) = \operatorname*{arg\,max}_{c\in\mathcal{C}_d}
    \cos(\mathbf{v}_k,\;\mathbf{v}_c)$,
where $\mathbf{v}_k$ is the embedding of keyword $k$. The linkage
functions $\Phi = \{\phi_d\}_{d\in\mathcal{L}_C}$ constitute the
second component of $\mathcal{L}_A$. The linking confidence is
$\operatorname{conf}_d(k)=\cos(\mathbf{v}_k,\,
\mathbf{v}_{\phi_d(k)})$.

\subsubsection{TFS Weighting and Inter-domain Association}
\label{sec:graph_construct}

The Alignment Layer encodes inter-domain relationships through
keyword-mediated connections between concept chains. Let
$\mathcal{D}:=\mathcal{D}_{\text{refined}}$ denote the document set,
$\mathcal{C}$~the Concept Chain Set ($|\mathcal{C}|{=}403$),
$\mathcal{K}$~the global keyword vocabulary, and
$\mathcal{C}_d$~the top-$M$ chains of document~$d$.

\paragraph{TFS Global Connection Weight.}
The TFS weight $W_{c,k}$ between a concept chain $c$ and a keyword $k$
integrates three factors, \emph{Typicality}, \emph{Fidelity}, and
\emph{Specificity}:
\begin{equation}
  W_{c,k}
    = P(k\!\mid\!c)
      \!\cdot\!\operatorname{AC}(c,k)
      \!\cdot\!\operatorname{IDF}_{\!\text{chain}}(k).
  \label{eq:global_weight}
\end{equation}
Let
$\mathcal{D}_c=\{d\!\in\!\mathcal{D}\mid c\!\in\!\mathcal{C}_d\}$ and
$\operatorname{F}(c,k)=
\sum_{d\in\mathcal{D}_c}\mathbb{I}[\phi_d(k){=}c]$, where the
indicator contributes $1$ only when $k$ is among the top-$T$ keywords of
$d$ and is linked to $c$. \emph{Typicality}
$P(k\!\mid\!c)=\operatorname{F}(c,k)/|\mathcal{D}_c|$ measures how
frequently $k$ is linked to $c$. \emph{Fidelity}
$\operatorname{AC}(c,k) {=}
{\sum\nolimits_{d\in\mathcal{D}_c}
            \operatorname{conf}_d(k)\,\mathbb{I}[\phi_d(k)=c]}\,/\,
           {\operatorname{F}(c,k)}$
captures the average linking confidence when $k$ is linked to $c$.
\emph{Specificity} penalizes keywords linked to many chains via a
\emph{chain-level} inverse document frequency:
$\operatorname{IDF}_{\!\text{chain}}(k)
  = \log\tfrac{|\mathcal{C}|+1}{|\mathcal{C}_k|}$,
where
$\mathcal{C}_k{=}\{c\!\in\!\mathcal{C}\mid
\operatorname{F}(c,k)>0\}$.

\paragraph{Inter-domain Association.}
As illustrated in Stage~III\,(d) of Figure~\ref{fig:framework}, the
OCG captures inter-domain relationships through association metrics
computable at \emph{any level} of the taxonomy. Each leaf chain $c$
has a TFS weight vector
$\mathbf{w}_c\in\mathbb{R}^{|\mathcal{K}|}$ with components
$[\mathbf{w}_c]_k = W_{c,k}$. For any hierarchical node $p$, let
$\mathcal{C}_p^{\downarrow}=\{c\in\mathcal{C}\mid
c\text{ starts with }p\}$ be its descendant leaf chains. We define the
accumulated signal vector
$\mathbf{s}_p =
\sum_{c\in\mathcal{C}_p^{\downarrow}}\mathbf{w}_c$, whose component
$S_{p,k}=\sum_{c\in\mathcal{C}_p^{\downarrow}}W_{c,k}$ aggregates
keyword weights across all descendant leaves. When $p$ is a leaf chain,
$\mathcal{C}_p^{\downarrow}{=}\{p\}$ and
$\mathbf{s}_p{=}\mathbf{w}_p$.
The association between any two nodes $p_i$ and $p_j$ is:
\begin{equation}
  \operatorname{Assoc}(p_i,p_j)
    = \frac{\sum\nolimits_{k\in\mathcal{K}}
            S_{p_i,k}\,S_{p_j,k}}
           {\|\mathbf{s}_{p_i}\|\;\|\mathbf{s}_{p_j}\|}.
  \label{eq:assoc}
\end{equation}
Further details are in Appendix~\ref{app:proofs}.

\paragraph{Concept Chain-to-Document Inverted Index.}
We construct an inverted index mapping each concept chain to its
associated documents, supporting efficient retrieval at any level of the
hierarchy. Together with the association metrics, the completed OCG
$\mathcal{G} =
\langle\mathcal{L}_C,\mathcal{L}_O,\mathcal{L}_A\rangle$
(Eq.~\ref{eq:ocg}) provides a unified, queryable structure for
domain-aware corpus organization, enabling flexible extraction of
high-quality data at arbitrary domain granularity and cross-domain scope.

\section{Experiments}
\label{sec:experiments}

We instantiate \textsc{Cortex} on Chinese, motivated by both the
substantial demand and the pronounced scarcity of high-quality Chinese
web corpora~\citep{CCI3HQ,ChineseWebText}, and evaluate along three
research questions:
\textbf{RQ1}: Can the knowledge distillation-based refinement
pipeline effectively distill high-quality content from web-scale corpora, and what distribution does the resulting corpus
and OCG exhibit?
\textbf{RQ2}: Is the refined corpus high-quality, and does
the OCG accurately organize and retrieve domain-specific content while
capturing latent inter-domain associations?
\textbf{RQ3}: Can the high-quality refined corpus and the constructed
OCG effectively support cross-domain synthesis of high-value data?

\subsection{Pipeline Validation and Analysis (RQ1)}
\label{sec:exp_rq1}

To address RQ1, we validate the effectiveness of the knowledge
distillation-based refinement pipeline and characterize the
distributional properties of the corpus and OCG. 
(Appendix~\ref{app:pipeline_stats}).

\begin{table}[t]
\centering\small
\setlength{\tabcolsep}{3pt}
\begin{tabular}{@{}llccc@{}}
\toprule
\textbf{Student} & \textbf{Task}
  & \textbf{Target} & \textbf{Recall}
  & $\boldsymbol{\rho}$ \\
\midrule
$\mathcal{S}_{\text{tc}}^*$
  & Temporal Classif.
  & 82.56\,$^{F_1}$
  & ---
  & --- \\
$\mathcal{S}_{\text{dq-tr}}^*$
  & Doc Quality (\textsc{TR})
  & 80.33\,$^{F_2}$
  & 98.00
  & .805 \\
$\mathcal{S}_{\text{dq-ti}}^*$
  & Doc Quality (\textsc{TI})
  & 84.25\,$^{F_2}$
  & 95.83
  & .922 \\
$\mathcal{S}_{\text{sq}}^*$
  & Sentence Quality
  & 96.30\,$^{F_2}$
  & 98.33
  & .908 \\
\bottomrule
\end{tabular}

\caption{Student model performance on held-out test sets.
Metric definitions are in Appendix~\ref{app:kd_detail}.}
\label{tab:kd_results}
\end{table}

\paragraph{Knowledge Distillation Effectiveness.}
Table~\ref{tab:kd_results} reports the performance of the four
knowledge-distilled student models on their held-out
natural-distribution test sets (Appendix~\ref{app:training}). All
students are initialized from MacBERT-Large (${\sim}$0.3\,B
parameters; Appendix~\ref{app:student_arch}) and trained to learn
the consensus scoring criteria jointly established by a
three-teacher LLM ensemble comprising
DeepSeek-V3.1~\cite{Deepseek-v3}, GPT-5~\citep{GPT-5}, and
Doubao-Seed-1.6. 
All quality scorers achieve target-class recall exceeding 95\%,
ensuring that virtually all teacher-endorsed high-quality content is
preserved during corpus-scale filtering.
The Spearman correlations of
0.805--0.922 confirm strong ordinal alignment with teacher consensus
soft scores, validating the OAR loss design (\S\ref{sec:oar_loss}).
These results demonstrate that the distillation pipeline compresses
the scoring behavior of a multi-hundred-billion-parameter teacher
ensemble into a single 0.3\,B student (${>}1{,}000{\times}$
compression; Appendix~\ref{app:model_selection}), enabling reliable
quality assessment at the scale of tens of billions of tokens.

\begin{table}[t]
\centering\small
\begin{tabular}{@{}lrrr@{}}
\toprule
\textbf{Tier} & \textbf{Documents} & \textbf{Fraction}
  & \textbf{Avg Tokens} \\
\midrule
\textsc{HighQ} & 15.13\,M & 20.69\% & 1{,}796 \\
\textsc{MidQ}  & 18.30\,M & 25.02\% & 1{,}288 \\
\textsc{LowQ}  & 39.72\,M & 54.30\% & 2{,}036 \\
\midrule
\textsc{TI}    & 36.50\,M & 49.91\% & 2{,}517 \\
\textsc{TR}    & 36.65\,M & 50.09\% & 1{,}085 \\
\midrule
Total          & 73.15\,M & 100\%   & 1{,}800 \\
\bottomrule
\end{tabular}
\caption{Quality-tier and temporal distribution of $\mathcal{D}_0$.
\textsc{TI}: \textsc{Time\text{-}Irrel};
\textsc{TR}: \textsc{Time\text{-}Rel}.
\textbf{Avg Tokens}: mean main-body token count per document.}
\label{tab:quality_dist}
\end{table}

\paragraph{Corpus Quality Distribution.}
Table~\ref{tab:quality_dist} summarizes the quality-tier and temporal
distributions of the preprocessed corpus $\mathcal{D}_0$ (73.15\,M
documents). \textsc{HighQ} documents account for 20.69\%,
constituting the 15.13\,M-document high-quality subset
$\mathcal{D}_{\text{dhq}}$ with 27.18\,B tokens. Sentence-level
filtering further refines this to $\mathcal{D}_{\text{refined}}$
with 24.14\,B tokens (Appendix~\ref{app:pipeline_stats}). 

While the temporal dimension is roughly balanced in
$\mathcal{D}_0$, the ratio shifts to 63.91\% \textsc{TI} in
$\mathcal{D}_{\text{dhq}}$ (Table~\ref{tab:dhq_temporal}), indicating
that time-invariant content (e.g., tutorials) is more likely to be of
higher quality. The domain distribution across 12 top-level domains
(Table~\ref{tab:domain_full}) reveals a pronounced long-tail pattern,
with five domains each contributing less than 2\% of high-quality
documents (Appendix~\ref{app:corpus_dist}).

\subsection{Corpus Quality and OCG Validation via Continual
  Pre-Training (RQ2)}
\label{sec:exp_rq2}

To address RQ2, we evaluate our method through continual pre-training (CPT) (Appendix~\ref{app:cpt_validation_design}).

\subsubsection{Experimental Setup}
\label{sec:cpt_setup}

\paragraph{Data Configurations.}
We select the \emph{Finance} domain
($c_{\text{fin}}=$ \texttt{Business \& Economy\,>\,Finance}) as a case
study and define two CPT data configurations using the OCG's inverted
index and association metric (Eq.~\ref{eq:assoc}). Let
$\mathcal{D}_c^{\text{TI}}=\{d\in\mathcal{D}_c \mid d\in
\mathcal{D}_0^{\text{TI}}\}$ denote the \textsc{Time\text{-}Irrel}
subset associated with chain~$c$, and
$\mathcal{N}_K(c) = \operatorname{top\text{-}}K_{c'\in\mathcal{C}
\setminus\{c\}}\operatorname{Assoc}(c,c')$ the $K$ nearest neighbor
chains:
\textbf{(i)}~\textbf{+\textsc{Fin}}:
$\mathcal{D}_{c_{\text{fin}}}^{\text{TI}}$, consisting of
time-invariant documents from the finance chain $c_{\text{fin}}$;
\textbf{(ii)}~\textbf{+\textsc{Fin}$^{++}$}:
$\mathcal{D}_{\text{fin}^{++}}^{\text{TI}} = \bigcup_{c'\in
\{c_{\text{fin}}\}\cup\mathcal{N}_{10}(c_{\text{fin}})}
\mathcal{D}_{c'}^{\text{TI}}$, additionally incorporating the top-10
neighbor chains of $c_{\text{fin}}$ retrieved via the OCG's
association metric (Table~\ref{tab:finance_neighbors}).

\paragraph{Training and Evaluation.}
We evaluate four seed models:
Qwen2.5-0.5B, Qwen2.5-3B, Qwen2.5-7B~\cite{QWen2.5},
and Llama-3-8B~\cite{LLama3-8B}.
All CPT is performed via full-parameter training(Table~\ref{tab:cpt_hparams}); 
\emph{Domain capability} is assessed on
CFBenchmark~\cite{CFBenchmark}, comprising six dimensions: Knowledge,
Calculation, Explanation, Identification, Analysis, and Compliance
(Appendix~\ref{app:cpt_detail}).

\subsubsection{Results}
\label{sec:cpt_results}

Table~\ref{tab:cpt_results} presents the results across all model and
data configurations.

\begin{table}[t]
\centering
\begingroup
\footnotesize
\setlength{\tabcolsep}{1.2pt}
\renewcommand{\arraystretch}{0.96}

\definecolor{CFGroupRow}{HTML}{F4F4F4}
\definecolor{CFBaseRow}{HTML}{FFFFFF}
\definecolor{CFFinRow}{HTML}{F8FBFE}
\definecolor{CFFinPPRow}{HTML}{F1F7FD}

\definecolor{CFAvgBase}{HTML}{F2F7FC}
\definecolor{CFAvgFin}{HTML}{E8F2FB}
\definecolor{CFAvgFinPP}{HTML}{DDEBFA}

\newcommand{\cfdatacfg}[1]{\hspace{0.35em}#1}
\newcommand{\cfbestavg}[1]{\textbf{#1}}
\newcommand{\cfsecondavg}[1]{\underline{#1}}

\newcommand{\avgbase}[1]{\cellcolor{CFAvgBase}#1}
\newcommand{\avgfin}[1]{\cellcolor{CFAvgFin}#1}
\newcommand{\avgfinpp}[1]{\cellcolor{CFAvgFinPP}#1}

\begin{tabularx}{\columnwidth}{@{}
  l
  *{7}{>{\centering\arraybackslash}X}
@{}}
\toprule
& \multicolumn{7}{c}{\textbf{CFBenchmark}} \\
\cmidrule(lr){2-8}
\textbf{Data}
  & \textbf{Know.}
  & \textbf{Calc.}
  & \textbf{Expl.}
  & \textbf{Ident.}
  & \textbf{Anal.}
  & \textbf{Compl.}
  & \textbf{Avg} \\
\midrule

\rowcolor{CFGroupRow}
\multicolumn{8}{@{}l}{\textbf{Qwen2.5-0.5B}} \\
\rowcolor{CFBaseRow}
\cfdatacfg{Base}
  & 14.0 & 15.9 & 12.7 & 50.7 & 19.5 & 4.0
  & \avgbase{19.5} \\
\rowcolor{CFFinRow}
\cfdatacfg{+\textsc{Fin}}
  & 14.0 & 23.2 & 28.3 & 46.7 & 19.8 & 4.7
  & \avgfin{\cfsecondavg{22.8}} \\
\rowcolor{CFFinPPRow}
\cfdatacfg{+\textsc{Fin}\textsuperscript{++}}
  & 17.3 & 24.1 & 30.9 & 54.7 & 20.1 & 4.0
  & \avgfinpp{\cfbestavg{25.2}} \\

\addlinespace[1pt]
\midrule
\rowcolor{CFGroupRow}
\multicolumn{8}{@{}l}{\textbf{Qwen2.5-3B}} \\
\rowcolor{CFBaseRow}
\cfdatacfg{Base}
  & 57.3 & 45.4 & 46.7 & 56.0 & 36.0 & 8.0
  & \avgbase{41.6} \\
\rowcolor{CFFinRow}
\cfdatacfg{+\textsc{Fin}}
  & 54.7 & 33.1 & 50.8 & 56.0 & 45.4 & 13.3
  & \avgfin{\cfsecondavg{42.2}} \\
\rowcolor{CFFinPPRow}
\cfdatacfg{+\textsc{Fin}\textsuperscript{++}}
  & 58.7 & 26.6 & 55.1 & 56.0 & 47.8 & 13.3
  & \avgfinpp{\cfbestavg{42.9}} \\

\addlinespace[1pt]
\midrule
\rowcolor{CFGroupRow}
\multicolumn{8}{@{}l}{\textbf{Qwen2.5-7B}} \\
\rowcolor{CFBaseRow}
\cfdatacfg{Base}
  & 38.0 & 48.9 & 46.7 & 42.7 & 41.4 & 18.7
  & \avgbase{39.4} \\
\rowcolor{CFFinRow}
\cfdatacfg{+\textsc{Fin}}
  & 76.0 & 47.9 & 54.9 & 53.3 & 43.4 & 29.3
  & \avgfin{\cfsecondavg{50.8}} \\
\rowcolor{CFFinPPRow}
\cfdatacfg{+\textsc{Fin}\textsuperscript{++}}
  & 86.7 & 49.8 & 51.2 & 57.3 & 46.2 & 21.3
  & \avgfinpp{\cfbestavg{52.1}} \\

\addlinespace[1pt]
\midrule
\rowcolor{CFGroupRow}
\multicolumn{8}{@{}l}{\textbf{Llama-3-8B}} \\
\rowcolor{CFBaseRow}
\cfdatacfg{Base}
  & 11.3 & 5.4 & 8.2 & 28.0 & 30.7 & 0.0
  & \avgbase{13.9} \\
\rowcolor{CFFinRow}
\cfdatacfg{+\textsc{Fin}}
  & 21.3 & 25.6 & 28.0 & 44.0 & 50.7 & 7.3
  & \avgfin{\cfsecondavg{29.5}} \\
\rowcolor{CFFinPPRow}
\cfdatacfg{+\textsc{Fin}\textsuperscript{++}}
  & 22.0 & 28.9 & 35.9 & 57.3 & 36.5 & 6.7
  & \avgfinpp{\cfbestavg{31.2}} \\

\bottomrule
\end{tabularx}

\caption{CPT results across all seed models and data configurations.
Full dimension names are in Appendix~\ref{app:cfb_benchmark}.}
\label{tab:cpt_results}
\endgroup
\end{table}

\paragraph{Domain Capability Enhancement.}
Comparing the Base, +\textsc{Fin}, and +\textsc{Fin}$^{++}$ rows
reveals consistent improvements across all four models; for instance,
Llama-3-8B improves from 13.9 to 29.5 in average score and
Qwen2.5-7B from 39.4 to 50.8, with +\textsc{Fin}$^{++}$ further
raising Qwen2.5-7B to 52.1. These gains jointly validate the quality
of the \textsc{Cortex}-curated corpus, the accuracy of the OCG's
domain partitioning and inverted-index retrieval, and the ability of
inter-domain associations to surface complementary cross-domain
training signal beyond the target domain alone. Such patterns hold
consistently across models spanning 0.5\,B to 8\,B parameters and
two model families (Appendix~\ref{app:cpt_detail}).

\begin{table*}[t]
\centering
\footnotesize
\setlength{\tabcolsep}{1.6pt}
\renewcommand{\arraystretch}{1.08}

\definecolor{PosDeltaColor}{HTML}{1B7F3A}
\definecolor{NegDeltaColor}{HTML}{B83232}

\newcommand{\scoreonly}[1]{%
  \makebox[2.7em][r]{#1}%
}
\newcommand{\poschg}[1]{%
  \makebox[2.85em][l]{%
    \hspace{0.16em}{\fontsize{6.1pt}{6.7pt}\selectfont
    \textcolor{PosDeltaColor}{\ensuremath{\uparrow}#1}}%
  }%
}
\newcommand{\negchg}[1]{%
  \makebox[2.85em][l]{%
    \hspace{0.16em}{\fontsize{6.1pt}{6.7pt}\selectfont
    \textcolor{NegDeltaColor}{\ensuremath{\downarrow}#1}}%
  }%
}
\newcommand{\scoredeltapos}[2]{%
  \makebox[2.7em][r]{#1}%
  \poschg{#2}%
}
\newcommand{\scoredeltaneg}[2]{%
  \makebox[2.7em][r]{#1}%
  \negchg{#2}%
}

\resizebox{\textwidth}{!}{%
\begin{tabular}{@{}l cccc cccc@{}}
\toprule
& \multicolumn{4}{c}{\textbf{Bridge}}
& \multicolumn{4}{c}{\textbf{Comparison}} \\
\cmidrule(lr){2-5}\cmidrule(lr){6-9}
\textbf{Model}
  & \textsc{Closed} & \textsc{1-Srch} & \textsc{M-Srch}
  & \textsc{Oracle}
  & \textsc{Closed} & \textsc{1-Srch} & \textsc{M-Srch}
  & \textsc{Oracle} \\
\midrule
DeepSeek-V3.2
  & \scoreonly{30.3}
  & \scoredeltapos{41.3}{11.0}
  & \scoredeltapos{52.2}{10.9}
  & \scoredeltapos{\underline{94.8}}{42.6}
  & \scoreonly{22.6}
  & \scoredeltapos{28.8}{6.2}
  & \scoredeltapos{47.2}{18.4}
  & \scoredeltapos{94.1}{46.9} \\
DeepSeek-V4-Flash
  & \scoreonly{30.9}
  & \scoredeltapos{44.0}{13.1}
  & \scoredeltapos{54.1}{10.1}
  & \scoredeltapos{90.2}{36.1}
  & \scoreonly{24.8}
  & \scoredeltapos{30.3}{5.5}
  & \scoredeltapos{49.1}{18.8}
  & \scoredeltapos{92.8}{43.7} \\
\midrule
Gemini 3 Pro
  & \scoreonly{30.9}
  & \scoredeltapos{38.1}{7.2}
  & \scoredeltapos{\textbf{63.0}}{24.9}
  & \scoredeltapos{80.1}{17.1}
  & \scoreonly{28.6}
  & \scoredeltapos{31.3}{2.7}
  & \scoredeltapos{50.9}{19.6}
  & \scoredeltapos{72.2}{21.3} \\
Gemini 3.1 Pro
  & \scoreonly{28.1}
  & \scoredeltapos{\textbf{65.7}}{37.6}
  & \scoredeltaneg{52.5}{13.2}
  & \scoredeltapos{82.5}{30.0}
  & \scoreonly{24.3}
  & \scoredeltapos{\textbf{54.1}}{29.8}
  & \scoredeltaneg{40.4}{13.7}
  & \scoredeltapos{65.6}{25.2} \\
\midrule
Claude Sonnet 4.6
  & \scoreonly{34.7}
  & \scoredeltapos{52.5}{17.8}
  & \scoredeltapos{59.1}{6.6}
  & \scoredeltapos{84.0}{24.9}
  & \scoreonly{25.0}
  & \scoredeltapos{41.9}{16.9}
  & \scoredeltaneg{41.0}{0.9}
  & \scoredeltapos{83.1}{42.1} \\
Claude Opus 4.6
  & \scoreonly{\underline{35.3}}
  & \scoredeltapos{50.9}{15.6}
  & \scoredeltapos{\underline{62.5}}{11.6}
  & \scoredeltapos{91.9}{29.4}
  & \scoreonly{\underline{31.7}}
  & \scoredeltapos{41.0}{9.3}
  & \scoredeltapos{\textbf{57.1}}{16.1}
  & \scoredeltapos{\textbf{95.7}}{38.6} \\
\midrule
GPT-4o
  & \scoreonly{26.6}
  & \scoredeltapos{26.9}{0.3}
  & \scoredeltapos{33.8}{6.9}
  & \scoredeltapos{77.5}{43.7}
  & \scoreonly{21.6}
  & \scoredeltapos{23.1}{1.5}
  & \scoredeltapos{28.1}{5.0}
  & \scoredeltapos{73.4}{45.3} \\
GPT-5.4
  & \scoreonly{\textbf{47.2}}
  & \scoredeltapos{\underline{58.4}}{11.2}
  & \scoredeltapos{60.9}{2.5}
  & \scoredeltapos{\textbf{96.1}}{35.2}
  & \scoreonly{\textbf{32.3}}
  & \scoredeltapos{\underline{48.4}}{16.1}
  & \scoredeltapos{\underline{51.6}}{3.2}
  & \scoredeltapos{\underline{94.5}}{42.9} \\
\bottomrule
\end{tabular}%
}
\caption{\textsc{CortexBench} results (Accuracy~\%).
\textsc{Closed}: closed-book.
\textsc{1-Srch}: single-round retrieval.
\textsc{M-Srch}: Multi-Search.
\textsc{Oracle}: gold evidence provided.
Small values indicate absolute changes from the previous setting.
}
\label{tab:benchmark_results}
\end{table*}

\subsection{Cross-Domain Synthesis Validation via
  \textsc{CortexBench} (RQ3)}
\label{sec:exp_rq3}

To address RQ3, we synthesize \textsc{CortexBench}, a cross-domain
search-and-reasoning QA benchmark, to validate the effectiveness of
combining the high-quality corpus with the OCG's structured
cross-domain organization for producing high-value cross-domain data.
We additionally release \textsc{CortexBench} to support future
research on cross-domain evaluation.

\subsubsection{Benchmark Construction}
\label{sec:bench_construct}

The synthesis pipeline produces QA pairs grounded in cross-domain
evidence and constructs per-instance candidate pools for controlled
evaluation. It proceeds in three stages
(Algorithm~\ref{alg:synthesis}).

\paragraph{OCG-Guided QA Synthesis.}
For each instance, a keyword $k\in\mathcal{K}$ is selected,
prioritizing named entities. Using
$\mathcal{C}_k{=}\{c{\in}\mathcal{C}\mid\operatorname{F}(c,k){>}0\}$
(\S\ref{sec:graph_construct}), we retrieve either a
\emph{high-association} chain pair
$(c_1^{+},c_2^{+}) =
\arg\max_{c_i\neq c_j\in\mathcal{C}_k}
\operatorname{Assoc}(c_i,c_j)$
or a \emph{low-association} pair (analogously via $\arg\min$).
For each pair $(c_1,c_2)$, the top-$N$ ($N{=}40$) documents per
chain are retrieved via the inverted index, and an LLM-based entity
extraction and matching procedure identifies an evidence pair
$(d_A,d_B)$ with $d_A\in\mathcal{D}_{c_1}$,
$d_B\in\mathcal{D}_{c_2}$. A frontier LLM then synthesizes bridge
and comparison QA pairs from $(d_A,d_B)$, each paired with a candidate pool $\mathcal{S}_{\text{cand}}$
($|\mathcal{S}_{\text{cand}}|{=}6{,}000$)
(Algorithm~\ref{alg:synthesis}; Appendix~\ref{app:synth_detail}).

\paragraph{Benchmark Statistics.}
\textsc{CortexBench} comprises 917 QA pairs organized along three
dimensions: question type, temporal source, and chain-pair association
level (Table~\ref{tab:bench_stats};
Appendix~\ref{app:bench_stats}).
The current instantiation utilizes
merely 42 of the over 1.3~million keywords in the OCG's vocabulary,
a utilization ratio below $1{:}30{,}000$, yet already yields 917
high-difficulty instances, demonstrating substantial scaling potential
for both evaluation and training data generation.
\label{sec:bench_results}

\subsubsection{Evaluation Protocol}
\label{sec:bench_eval}

For each instance $q$ with gold evidence $(d_A,d_B)$ and candidate
pool $\mathcal{S}_{\text{cand}}$
($|\mathcal{S}_{\text{cand}}|{=}6{,}000$), we define four
settings:
\textbf{(i)}~\textsc{Closed-book}: the model answers from parametric
knowledge alone (lower bound);
\textbf{(ii)}~\textsc{1-Search}: the model formulates a query and
retrieves chunks via
$\operatorname{Search}(\mathbf{q}_{\text{kw}},q_{\text{sum}},m)$, a
hybrid retrieval interface over $\mathcal{S}_{\text{cand}}$;
\textbf{(iii)}~\textsc{Multi-Search}: iterative query refinement via
ReAct~\citep{react};
\textbf{(iv)}~\textsc{Oracle}: gold evidence $(d_A,d_B)$ is provided
(upper bound).
Unlike live web search 
whose dynamic environments hinder
reproducibility~\citep{BrowseComp-Plus}, this design enables stable,
fine-grained error attribution. We evaluate eight frontier LLMs
using accuracy 
and
Evidence Hit@$k$
(Appendix~\ref{app:search_impl}).

\subsubsection{Results}

Table~\ref{tab:benchmark_results} presents accuracy under four
evaluation settings for bridge and comparison questions.

\paragraph{Synthesis Quality and Benchmark Difficulty.}
The \textsc{Closed-book} setting establishes the difficulty baseline:
even the best-performing model (GPT-5.4) achieves only 47.2\% on
bridge and 32.3\% on comparison, while most models score below 35\%.
This confirms that the OCG-driven synthesis produces genuinely
challenging cross-domain questions beyond parametric knowledge.

\paragraph{Retrieval Utility and Search Planning.}
The transition from \textsc{Closed-book} to \textsc{1-Search}
yields consistent gains ranging from 0.3 to 37.6 points on bridge.
\textsc{Multi-Search} produces further improvements for most models
(e.g., Gemini~3~Pro +24.9 to 63.0), yet some show degradation
(Gemini~3.1~Pro $-$13.2), indicating that effective iterative search
planning remains challenging for current systems.

\paragraph{Benchmark Discriminative Power.}
Most models exceed 80\% in the \textsc{Oracle} setting (GPT-5.4:
96.1\% on bridge), yet the large \textsc{M-Srch}--\textsc{Oracle}
gap quantifies substantial remaining improvement space for agentic
search strategies. Detailed analysis by association level, temporal
source, and evidence hit rates is in
Appendix~\ref{app:bench_breakdown}.

\section{Conclusion}
\label{sec:Conclusion}

We presented \textsc{Cortex}, to our knowledge the first web-scale
corpus construction framework to unify content quality refinement,
lightweight ontology construction, and cross-domain association
modeling through an Ontological Corpus Graph (OCG). Experiments
validate the effectiveness of the refinement pipeline, the OCG's
domain organization and cross-domain association modeling, and the
data synthesis methodology. 
We publicly release the complete pipeline, a 24.14\,B-token refined
Chinese corpus with its OCG, and \textsc{CortexBench}.
By coupling quality refinement with structured
knowledge organization, \textsc{Cortex} advances corpus construction
beyond flat document filtering, establishing a high-quality,
well-organized ontological corpus graph at web scale.

\section*{Limitations}

The current instantiation processes a single Common Crawl snapshot and
is validated on Chinese; the \textsc{CortexBench} data synthesis
utilizes only 42 of the OCG's over 1.3~million keywords. These scope
choices stem from computational constraints rather than methodological
limitations: each pipeline stage operates independently across
snapshots, and the data synthesis procedure can readily scale to the full
keyword vocabulary. This enables expansion to additional snapshots and
languages, and data synthesis at substantially greater scale for both
evaluation and training.

\bibliography{custom}

\appendix

\section{Pipeline Comparison Table Details}
\label{app:comparison_table}

Table~\ref{tab:comparison_full} presents the full unabbreviated
version of Table~\ref{tab:comparison}. Abbreviations used in the
main-text table: \textbf{FineW}: FineWeb-Edu~\citep{FineWeb};
\textbf{Ne-CC}: Nemotron-CC~\citep{Nemotron-CC};
\textbf{Metadata extr.}: Metadata extraction;
\textbf{Language ID}: Language identification;
\textbf{Q\&A / explan.}: Q\&A / explanation;
\textbf{Expression qua.}: Expression quality;
\textbf{Info.\ density}: Information density;
\textbf{Temporal}: Temporal relevance;
\textbf{Domain class.}: Domain classification;
\textbf{Inter-domain ass.}: Inter-domain association.
Each quality assessment row indicates whether the pipeline's scoring
rubric or filtering mechanism explicitly accounts for that aspect of
content quality. ``OCG'' denotes capabilities provided by the
Ontological Corpus Graph.

\begin{table*}[t]
\centering\footnotesize
\setlength{\tabcolsep}{3.5pt}
\begin{tabular}{@{}ll ccccc@{}}
\toprule
& \textbf{Pipeline Step}
  & \textbf{\textsc{Cortex}}
  & \textbf{FineWeb(-Edu)}
  & \textbf{DCLM}
  & \textbf{C4}
  & \textbf{Nemotron-CC} \\
\midrule
\multirow{5}{*}{\textbf{Preprocessing}}
  & HTML-to-Text
    & \cmark & \cmark & \cmark & $\circ$ & \cmark \\
  & Body extraction
    & \cmark & \cmark & \cmark & \xmark & \cmark \\
  & Metadata extraction
    & \cmark & \xmark & \xmark & \xmark & \xmark \\
  & Language identification
    & \cmark & \cmark & \cmark & \cmark & \cmark \\
  & Deduplication
    & \cmark & \cmark & \cmark & \cmark & \cmark \\
\midrule
\multirow{7}{*}{\shortstack[l]{\textbf{Quality}\\\textbf{Assessment}}}
  & Causal reasoning
    & \cmark & \xmark & \xmark & \xmark & \xmark \\
  & Q\&A / explanation
    & \cmark & \xmark & $\circ$ & \xmark & $\circ$ \\
  & Factual accuracy
    & \cmark & $\circ$ & \xmark & \xmark & $\circ$ \\
  & Expression quality
    & \cmark & $\circ$ & \xmark & $\circ$ & $\circ$ \\
  & Information density
    & \cmark & $\circ$ & \xmark & \xmark & $\circ$ \\
  & Harmful content
    & \cmark & $\circ$ & $\circ$ & \cmark & $\circ$ \\
  & Temporal relevance
    & \cmark & \xmark & \xmark & \xmark & \xmark \\
\midrule
\multirow{2}{*}{\shortstack[l]{\textbf{Cross-domain}\\\textbf{Modeling}}}
  & Domain classification
    & OCG & \xmark & \xmark & \xmark & \xmark \\
  & Inter-domain assoc.
    & OCG & \xmark & \xmark & \xmark & \xmark \\
\bottomrule
\multicolumn{7}{l}{\scriptsize
  \cmark\;explicitly addressed;\quad
  $\circ$\;partially captured;\quad
  \xmark\;not addressed.} \\
\end{tabular}
\caption{Full unabbreviated version of Table~\ref{tab:comparison}:
comparison of \textsc{Cortex} with representative web corpus pipelines
across preprocessing, quality assessment, and cross-domain modeling.}
\label{tab:comparison_full}
\end{table*}

\section{Data Format and Auxiliary Fields}
\label{app:data_format}

After Stage~I, each document is stored as a JSON record containing:
\texttt{url}, \texttt{warc\_id}, \texttt{date} (publication date via
Trafilatura), \texttt{title}, \texttt{language}, \texttt{text\_main}
(main body text for all subsequent processing), and several auxiliary
fields including full-page text, non-body text, multimedia URLs, and
extracted question patterns. A complete field specification is available
in our released codebase.

\section{Preprocessing Pipeline Details}
\label{app:preprocessing_details}

\paragraph{Data License.}
The raw data used in this work is sourced from Common
Crawl~\citep{commoncrawl}, a publicly available web archive that
provides its datasets free of charge for any purpose, 
including
commercial use, under liberal terms of
use.
\footnote{\url{https://commoncrawl.org/terms-of-use}}

\paragraph{HTML Content and Metadata Extraction.}
We extract main body text from raw HTML using
Resiliparse~\cite{Resiliparse,resiliparse2}, a
high-performance web archive parsing toolkit. The DCLM
study~\cite{DCLM} reports that Resiliparse achieves comparable
downstream quality to Trafilatura~\citep{Trafilatura} while
being approximately $8\times$ faster, making it well-suited for
trillion-token-scale processing. Because Resiliparse does not reliably
extract structured metadata (e.g., publication date), we additionally
employ Trafilatura for metadata and main body text extraction. We also
retain several auxiliary data fields; details are in
Appendix~\ref{app:data_format}.

\paragraph{Multi-Stage Language Filtering.}
We adopt a three-stage coarse-to-fine strategy to identify documents in
the target language:
\textbf{(i)}~Resiliparse's built-in language filter performs a rapid first
pass, exploiting its high throughput to efficiently discard obviously
off-target pages at full archive scale;
\textbf{(ii)}~surviving documents are re-classified with the FastText
language identification model~\citep{FastText}, which achieves high
per-language recall on the WiLI-2018 benchmark~\citep{WiLI-2018}
(e.g., $0.984$ for Chinese and $1.000$ for English);
\textbf{(iii)}~since the scoring rubric explicitly down-weights
off-target-language content (\S\ref{sec:teacher}), our document-level
quality scoring model (\S\ref{sec:doc_qa}) inherently penalizes such
documents, serving as an implicit final language purity check.

\paragraph{Global Exact Deduplication.}
Inspired by the snapshot-wide deduplication practices of
FineWeb~\citep{FineWeb} and
Nemotron-CC~\citep{Nemotron-CC}, we perform global exact deduplication
within a single snapshot to remove identical documents across different
WARC slices. The core idea is to compute a deterministic fingerprint for
each document's normalized main body text and retain only the earliest
occurrence of each unique fingerprint. We implement this via a scalable
three-pass \emph{bucket--select--rewrite} algorithm; details are given in
Appendix~\ref{app:dedup}.

\section{Deduplication Algorithm Details}
\label{app:dedup}

Our three-pass deduplication operates as follows.
\textbf{Pass~A} scans all documents, applies a deterministic text
normalization function
$\operatorname{Norm}(\cdot)$---comprising NFKC Unicode normalization,
whitespace/newline folding, zero-width and control character removal, and
leading/trailing stripping---and computes a 128-bit fingerprint
$h(d)=\texttt{xxh3\_128}\!\bigl(\operatorname{Norm}(
\texttt{text\_main}(d))\bigr)$ for each document~$d$.
Fingerprint--position pairs
$\langle h(d),\,\operatorname{pos}(d)\rangle$ are written into
hash-partitioned buckets $b(d)=h(d)\bmod B$, where $B$ is the number of
buckets. Because identical texts always fall into the same bucket,
within-bucket deduplication is equivalent to global deduplication.
\textbf{Pass~B} applies a \emph{keep-first} policy within each bucket:
for each unique fingerprint~$f$, only the document with the earliest
position in lexicographic file order is retained, i.e.,
$d^*_f = \arg\min_{d:\,h(d)=f}\operatorname{pos}(d)$. This produces a
global keep bitmap.
\textbf{Pass~C} rewrites only kept records to a new directory.

\section{Teacher Model Selection and Scoring Rubrics}
\label{app:teacher_models}

\subsection{Teacher Model Selection}
\label{app:model_selection}

While \textsc{Cortex} can be applied to any target language
without architectural changes, we instantiate it on \textbf{Chinese} in
this work.
The teacher models are therefore selected based on their Chinese
language capabilities. We employ an ensemble of $|\mathcal{M}|{=}3$
frontier LLMs:
DeepSeek-V3.1~\citep{Deepseek-v3},
GPT-5~\citep{GPT-5}, and
Doubao-Seed-1.6, selected based on their top rankings
on the ReLE benchmark~\citep{RELE_bench}, a scalable evaluation system
that diagnoses capability anisotropy across 7 domains and
${\sim}$300 fine-grained dimensions for Chinese LLMs.

\paragraph{Teacher Model Scale.}
Among the three teachers, DeepSeek-V3.1 is the only open-weight model;
it inherits the DeepSeek-V3 architecture, a Mixture-of-Experts model
comprising 671\,B total parameters with 37\,B activated per
token~\citep{Deepseek-v3}. GPT-5 and Doubao-Seed-1.6 are
closed-source systems whose parameter counts have not been officially
disclosed. Collectively, each teacher operates at a scale that is
orders of magnitude larger than the 0.3\,B-parameter student, yielding
a compression ratio exceeding $1{,}000\!\times$.

\subsection{Document-Level Teacher Scoring Rubric}
\label{app:prompts}

The document-level teacher scoring rubric instructs each teacher
$\mathcal{M}_m$ to output a JSON object with
\texttt{hard\_label}$\in$\{\textsc{HighQ},\textsc{MidQ},\textsc{LowQ}\},
\texttt{soft\_label}$\in[0,1]$, and
\texttt{time\_related}$\in$\{True, False\}.

The evaluation dimensions are:
(1)~\textbf{Causal reasoning}: presence of causal chains,
argumentation, or step-by-step reasoning;
(2)~\textbf{Question answering / explanation}: clear answers to ``why''
or ``how'' questions;
(3)~\textbf{Factual definitions / descriptions}: accurate scientific,
historical, or conceptual information;
(4)~\textbf{Expression quality}: coherence, fluency, completeness;
(5)~\textbf{Information accuracy}: reliability and correctness.
Content exhibiting strong causal reasoning, high factual accuracy, and
expressive quality receives higher scores.

Additional instructions include explicit down-weighting of
adult/gambling/advertising content, temporal relevance assessment based
on content nature (not metadata), and lower scores for
off-target-language content. The input is constructed as
\texttt{[Publication date:\{date\}]. \{title\}. \{text\_main\}}.
The complete document-level teacher scoring rubric prompt is provided
in the released codebase.

\subsection{Sentence-Level Teacher Scoring Rubric}
\label{app:sen_prompt}

The sentence-level teacher scoring rubric instructs each teacher
$\mathcal{M}_m$ to output a JSON object with
\texttt{hard\_label}$\in$\{\textsc{HighQ\_Sen}, \textsc{MidQ\_Sen},
\textsc{LowQ\_Sen}\} and \texttt{soft\_label}$\in[0,1]$, following the
same threshold mapping as the document level: $[0,\,0.4]$ corresponds to
\textsc{LowQ\_Sen}, $(0.4,\,0.7)$ to \textsc{MidQ\_Sen}, and
$[0.7,\,1.0]$ to \textsc{HighQ\_Sen}. No temporal relevance judgment is
included, as this is resolved at the document level
(\S\ref{sec:doc_qa}).

The rubric inherits the core quality dimensions from the document-level
rubric and additionally emphasizes:
(1)~\textbf{Logical coherence markers}: sentences containing explicit
numbering or structural markers (e.g., sequential markers such as
\emph{firstly}/\emph{secondly}, numbered lists such as
\emph{1.}/\emph{2.}, or enumeration symbols) receive a scoring boost, as
they preserve the overall coherence of the merged paragraph in downstream
processing;
(2)~\textbf{Causal connectives}: sentences with explicit causal
connectives (e.g., \emph{because}, \emph{therefore}, \emph{hence},
\emph{as a result}, \emph{thus it can be seen}) in valid reasoning
receive a significant score increase;
(3)~\textbf{Information density}: substantive, information-rich sentences
are preferred over low-density guiding content;
(4)~\textbf{Paragraph coherence}: transitional and summarizing sentences
that link preceding and following context are scored favorably, balancing
intrinsic quality and structural contribution.
The complete sentence-level teacher scoring rubric prompt is provided
in the released codebase.

\section{Adaptive Sample Weighting}
\label{app:sample_weighting}

Samples near the critical decision boundary and those belonging to the
target positive class may carry more discriminative information. We
apply adjustable weights
$w_n = w_0
  \cdot\bigl(1+\lambda_{+}
         \cdot\mathbb{I}[y_n\in\mathcal{Y}_{+}]\bigr)
  \cdot\bigl(1+\lambda_{\text{bd}}
         \cdot\mathbb{I}[|y_n-t_*|\le\epsilon]\bigr)$,
where $w_0$ is the base weight, $\mathcal{Y}_{+}$ is the target positive
class, $t_*$ is the critical decision boundary, $\lambda_{+}$ and
$\lambda_{\text{bd}}$ are amplification coefficients, and $\epsilon$ is
the boundary bandwidth. For document-level quality scoring,
$\mathcal{Y}_{+}=\{y:y\ge t_2\}$ (i.e., \textsc{HighQ}) and $t_*=t_2$.
For sentence-level scoring,
$\mathcal{Y}_{+}=\{y:y>t_1\}$ (i.e., the ``Keep'' class) and
$t_*=t_1$. Hyperparameter values are listed in
Appendix~\ref{app:hyperparams}.

\section{OAR Temperature Behavior}
\label{app:oar_temperature}

\paragraph{Ordinal Boundary Correspondence.}
The two ordinal boundaries in $\mathcal{L}_{\text{ord}}$ correspond
to the \textsc{LowQ}/\textsc{MidQ} threshold ($t_1$) and the
\textsc{MidQ}/\textsc{HighQ} threshold ($t_2$). Non-negativity of all
three class probabilities ($q_{\text{L}},q_{\text{M}},q_{\text{H}}$)
is guaranteed since $t_2>t_1$ and $\sigma$ is monotonically
increasing, ensuring $c_2 \ge c_1$ for any input score $s$.

In the ordinal boundary loss $\mathcal{L}_{\text{ord}}$
(\S\ref{sec:oar_loss}), the temperatures $\tau$ and $\tau^*$ are
separately tunable for the predicted and target distributions,
respectively. As $\tau\to 0$, the predicted distribution
$\hat{\mathbf{p}}_n$ approaches a hard one-hot vector concentrated on
the class containing $\hat{y}_n$, recovering standard classification.
Larger $\tau$ values produce smoother distributions that permit
non-zero gradients even when $\hat{y}_n$ is far from the class
boundaries, facilitating optimization in early training stages. An
analogous trade-off applies to $\tau^*$ for the target distribution.
In practice, we set $\tau{=}\tau^*{=}0.05$
(Table~\ref{tab:hyperparams}), which provides sharp but differentiable
boundary signals.

\begin{table*}[t]
\centering

\scriptsize
\renewcommand{\arraystretch}{1.1}
\setlength{\tabcolsep}{5pt}
\begin{tabular}{p{0.965\textwidth}}
\toprule
\rowcolor{tableheader}\textcolor{white}{\textbf{Prompt Template for Concept Chain Expansion}} \\
\midrule

\rowcolor{tablesection}\textbf{\# System Prompt} \\
You are a content expansion expert in Chinese academic writing and knowledge graph domains. Your task is to expand a ``concept chain'' into a short paragraph for semantic vector matching.
\par\vspace{1pt}
Requirements: preserve the semantics of the original concept chain, do not introduce irrelevant content, and do not include named entities such as specific works, brands, or people.
\par\vspace{1pt}
The output should be a natural Chinese paragraph of approximately 300--500 Chinese characters. It should cover the definition or core topic, the scope of subtopics, common discussion dimensions or scenarios, and related keywords, which should be naturally integrated into the text rather than presented as a list.
\par\vspace{1pt}
Output only the expanded text, without any prefix, suffix, quotation marks, numbering, or JSON. \\

\midrule
\rowcolor{tablesection}\textbf{\# User Prompt} \\
Concept chain: {\ttfamily \{chain\}}
\newline Please expand it according to the above requirements. \\

\bottomrule
\end{tabular}

\caption{Prompt template for concept chain expansion. The prompt guides the LLM to expand a concept chain into a Chinese paragraph suitable for semantic vector matching while preserving the original chain semantics and avoiding named entities.}
\label{tab:concept_chain_expansion_prompt}
\vspace{-1.2\baselineskip}

\end{table*}

\section{Student Model Architecture}
\label{app:student_arch}
\label{sec:student_arch}

All student models share a unified architecture $\mathcal{S}_\theta$,
parameterized by $\theta$, initialized from
MacBERT-Large~\citep{MacBERT} (${\sim}$0.3\,B parameters). A
single linear projection head maps the encoded representation to a
scalar logit, and a sigmoid function produces the predicted score
$\hat{y}=\mathcal{S}_\theta(x)=\sigma(\tilde{z})
=1/(1+e^{-\tilde{z}})$. The output interpretation differs by task: the
temporal classifier produces a binary probability, while quality scorers
produce a continuous score in $[0,1]$ mapped to ordinal labels through a
post-hoc decision threshold. Since document-level tasks require
processing texts far exceeding MacBERT's 512-token context window, we
introduce a \emph{chunk-then-aggregate} architecture; sentence-level
scoring uses the encoder directly without chunking.

\paragraph{Chunk Encoding.}
Given a document with main body text $\mathbf{x}$, we prepend the
publication date and apply a sliding window of length $L{=}512$ with
stride $S{=}256$ to produce chunks $\{x_1,\ldots,x_J\}$, retaining
only the first $J_{\max}$ chunks (head selection). Each chunk is
independently encoded by MacBERT; its pooler output $\mathbf{h}_j$ is
projected to a scalar logit
$z_j = \mathbf{w}^\top\!\operatorname{Dropout}(\mathbf{h}_j)+b$ for
$j=1,\ldots,J$.

\paragraph{Document-Level Aggregation.}
We aggregate chunk logits into a single document logit $\tilde{z}$ via
one of two strategies depending on the task.
For \emph{temporal relevance classification}, we use the \textbf{mean}:
$\tilde{z}^{\,\text{mean}} = \frac{1}{J}\sum_{j=1}^{J} z_j$.
For \emph{quality scoring}, we use a \textbf{proportional top-$k$}
(top-$k_\rho$) strategy that focuses on the most informative segments:
given a ratio $\rho\in(0,1]$, the number of aggregated chunks is
$K = \min(J,\;\max(1,\lceil\rho J\rceil))$, and the document logit is
the mean of the $K$ highest-valued chunk logits:
$\tilde{z}^{\,\text{topk}} = \frac{1}{K}\sum_{i=1}^{K} z_{\pi(i)}$,
where $\pi$ sorts $\{z_j\}_{j=1}^J$ in descending order.

\section{Training Protocol and Data Allocation}
\label{app:training}

\subsection{Training Protocol}
\label{app:training_protocol}

For all student models, a natural-distribution test set is held out from
the teacher-annotated data before constructing balanced training sets.
All models are trained with 5-fold stratified cross-validation as a
\emph{model selection} mechanism (not an ensemble): the fold yielding the
best validation metric is selected as the final model. Early stopping
monitors validation macro-$F_1$ for the temporal classifier and the
target-class $F_2$ for quality scorers, with a patience of 2 epochs.

\subsection{Data Allocation}
\label{app:data_split}

\paragraph{Document-Level.}
The teacher-annotated set $\mathcal{D}_{\text{anno}}$ contains 60,626
documents at the document level. A 5\% natural-distribution test set
(3,031 samples) is held out first. From the remaining 57,595 samples,
three balanced subsets are drawn:
the \emph{temporal classifier} set contains 8,220 samples (4,110
$\textsc{Time\text{-}Rel}$ and 4,110 $\textsc{Time\text{-}Irrel}$;
within each temporal category, samples are uniformly distributed across
the three quality
tiers---\textsc{HighQ}, \textsc{MidQ}, and \textsc{LowQ}---at 1,370 per
tier, to prevent confusion between temporal and quality signals);
the $\textsc{Time\text{-}Rel}$ quality scorer
$\mathcal{S}_{\text{dq-tr}}^*$ uses 4,110 samples exclusively from
$\textsc{Time\text{-}Rel}$ documents (1,370 per quality tier:
\textsc{HighQ}, \textsc{MidQ}, \textsc{LowQ});
the $\textsc{Time\text{-}Irrel}$ quality scorer
$\mathcal{S}_{\text{dq-ti}}^*$ uses 9,939 samples exclusively from
$\textsc{Time\text{-}Irrel}$ documents (3,313 per quality tier:
\textsc{HighQ}, \textsc{MidQ}, \textsc{LowQ}).

\paragraph{Sentence-Level.}
At the sentence level, 57,024 sentences from selected documents in
$\mathcal{D}_0$ are annotated by $\mathcal{M}$, from which a 5\%
natural-distribution test set (2,852 sentences) is held out. The
sentence-level quality scorer $\mathcal{S}_{\text{sq}}^*$ is trained on
22,230 balanced samples (7,410 per quality tier: \textsc{HighQ\_Sen},
\textsc{MidQ\_Sen}, and \textsc{LowQ\_Sen}). Note that sentence-level
quality scoring does not involve temporal categorization, as temporal
relevance is resolved at the document level (\S\ref{sec:doc_qa}).

\subsection{Evaluation Metrics}
\label{app:eval_metrics}

Since the document-level pipeline retains only \textsc{HighQ} documents,
we prioritize recall of the \textsc{HighQ} class and adopt $F_2$ as the
primary evaluation metric, which weights recall twice as heavily as
precision. For the temporal classifier, we use macro-$F_1$---the
unweighted average of per-class $F_1$ scores across
$\textsc{Time\text{-}Rel}$ and
$\textsc{Time\text{-}Irrel}$---to ensure balanced evaluation of both
precision and recall across both temporal categories. For sentence-level
scoring, the target is the Keep class
(\textsc{MidQ\_Sen}$\cup$\textsc{HighQ\_Sen}), and we similarly use
Keep-$F_2$.

\subsection{Pipeline Data Flow}
\label{app:pipeline_stats}

We instantiate \textsc{Cortex} on Chinese using the
Common Crawl~\citep{commoncrawl} snapshot CC-MAIN-2025-43
(${\sim}$2.61\,B raw pages).
Table~\ref{tab:pipeline_stats} traces data
volume through each stage, following the notation in
\S\ref{sec:methodology}.

\begin{table}[t]
\centering\small
\begin{tabular}{lr}
\toprule
\textbf{Stage} & \textbf{Volume}\\
\midrule
Raw CC snapshot (pages) & ${\sim}$2.61\,B\\
After coarse language filter & 118.94\,M\\
After fine language filter (zh) & 89.26\,M\\
$\mathcal{D}_0$ (after global exact dedup) & 73.15\,M\\
$\mathcal{D}_{\text{dhq}}$ (after doc-level filter) & 15.13\,M\\
\quad $\textsc{Time\text{-}Irrel}$ & 9.67\,M\\
\quad $\textsc{Time\text{-}Rel}$ & 5.46\,M\\
\midrule
\multicolumn{2}{l}{\emph{Token counts (main body):}}\\
$\mathcal{D}_{\text{dhq}}$ & 27.18\,B\\
$\mathcal{D}_{\text{refined}}$ (after sentence refinement)
  & \textbf{24.14\,B}\\
\quad $\textsc{Time\text{-}Irrel}$ & 15.05\,B\\
\quad $\textsc{Time\text{-}Rel}$ & 9.09\,B\\
\bottomrule
\end{tabular}
\caption{Data volume at each pipeline stage (Chinese instantiation on
Common Crawl~\citep{commoncrawl} snapshot CC-MAIN-2025-43). Symbols
follow \S\ref{sec:methodology}.}
\label{tab:pipeline_stats}
\end{table}

\paragraph{Released Resources.}
We release the complete \textsc{Cortex} pipeline, the trained student
scoring models, the 24.14\,B-token refined corpus with its OCG, and
\textsc{CortexBench}, providing the research community with a
reproducible end-to-end pipeline, quality-refined training data with
structured knowledge organization, and a rigorous cross-domain
evaluation benchmark.

\section{Threshold Calibration Protocol}
\label{app:threshold}

\paragraph{Structural vs.\ Decision Thresholds.}
The ordinal boundaries $t_1,t_2$ in $\mathcal{L}_{\text{OAR}}$ are
\emph{fixed structural parameters} governing the training objective.
At inference time, separate \emph{post-hoc decision thresholds}
$\eta_{\text{doc}}$ and $\eta_{\text{sen}}$ are calibrated on held-out
natural-distribution sets for document-level and sentence-level
scoring, respectively, to control the precision--recall trade-off.
This decoupling allows the same trained model to be deployed with
different thresholds for different applications.

Each natural-distribution test set is split into a \emph{calibration}
(40\%) and \emph{final test} (60\%) partition. On the calibration set, we
search for the task-specific decision threshold
($\eta_{\text{doc}}$ for document-level quality scorers, $\eta_{\text{sen}}$ for the sentence-level scorer) that
maximizes the target-class $F_\beta$ score:
\begin{equation}
  F_\beta(\theta)
    = \frac{(1{+}\beta^2)\,P(\theta)\,R(\theta)}
           {\beta^2 P(\theta)+R(\theta)},
\end{equation}
where $P$ and $R$ are precision and recall at threshold $\theta$. For
document-level quality scorers, the target class is \textsc{HighQ} with
$\beta{=}2$; for sentence-level scoring, the target class is Keep
(\textsc{MidQ\_Sen}$\cup$\textsc{HighQ\_Sen}) with $\beta{=}2$. The
temporal classifier uses a fixed threshold of $0.5$ without calibration.
The selected thresholds are frozen for final-test evaluation and
full-corpus inference.

\section{Corpus-Scale Inference Details}
\label{app:inference_details}

\paragraph{Temporal-Specific Scoring.}
The document-level quality score $\hat{y}_{\text{dq}}(d)$ is produced
by the task-appropriate scorer:
$\hat{y}_{\text{dq}}(d) = \mathcal{S}_{\text{dq-tr}}^*(d)$ for
$d\in\mathcal{D}_0^{\text{TR}}$ and
$\hat{y}_{\text{dq}}(d) = \mathcal{S}_{\text{dq-ti}}^*(d)$ for
$d\in\mathcal{D}_0^{\text{TI}}$. This bifurcation accounts for the
differing quality criteria between time-sensitive and time-invariant
content.

\paragraph{Sentence Segmentation.}
Each document $d\in\mathcal{D}_{\text{dhq}}$ is segmented into
sentences using Stanza~\cite{Stanza}, a neural pipeline supporting
tokenization, POS tagging, and syntactic parsing for over 70
languages.

\section{Hyperparameter Settings}
\label{app:hyperparams}

Table~\ref{tab:hyperparams} summarizes student model hyperparameters.

\begin{table}[t]
\centering\small
\setlength{\tabcolsep}{3pt}
\begin{tabular}{lcccc}
\toprule
\textbf{Parameter} & \textbf{Temp.} & \textbf{DQ-TR} & \textbf{DQ-TI}
  & \textbf{SQ}\\
\midrule
Backbone & \multicolumn{4}{c}{MacBERT-Large (${\sim}$0.3B)}\\
Optimizer & \multicolumn{4}{c}{AdamW (lr=2e-5, wd=0.01)}\\
Max seq len & 512 & 512 & 512 & 512\\
Stride & 256 & 256 & 256 & ---\\
Chunking & Yes & Yes & Yes & No\\
Chunk repr. & pooler & pooler & pooler & pooler\\
$J_{\max}$ (train/infer) & 14 & 14 & 14 & ---\\
Aggregation & mean & top-$k_\rho$ & top-$k_\rho$ & ---\\
$\rho$ & --- & 0.2 & 0.2 & ---\\
Loss & BCE & OAR & OAR & OAR\\
$\lambda_{\text{reg}}/\lambda_{\text{ord}}$ & --- & 1.0/1.0 & 1.0/1.0
  & 1.0/1.0\\
$\tau/\tau^*$ & --- & .05/.05 & .05/.05 & .05/.05\\
Huber $\delta$ & --- & 0.1 & 0.1 & 0.1\\
$\lambda_{+}$ & --- & 1.0 & 1.0 & 1.0\\
$\lambda_{\text{bd}}$ & --- & 1.0 & 1.0 & 1.0\\
$\epsilon$ & --- & 0.05 & 0.05 & 0.05\\
Epochs & 8 & 10 & 10 & 10\\
Batch / Grad accum & 8/8 & 4/8 & 4/8 & 4/8\\
Precision & bf16 & bf16 & bf16 & bf16\\
Warmup ratio & 0.06 & 0.06 & 0.06 & 0.06\\
Dropout & 0.1 & 0.1 & 0.1 & 0.1\\
\midrule
Early stop metric
  & macro-$F_1$
  & \multicolumn{2}{c}{HighQ-$F_2$}
  & Keep-$F_2$\\
Early stop patience
  & \multicolumn{4}{c}{2}\\
Cross-validation
  & \multicolumn{4}{c}{5-fold stratified (model selection)}\\
\bottomrule
\end{tabular}
\caption{Student model hyperparameters. \textbf{Temp.}: temporal
relevance classifier $\mathcal{S}_{\text{tc}}^*$;
\textbf{DQ-TR}/\textbf{DQ-TI}: document-level quality scorer
$\mathcal{S}_{\text{dq-tr}}^*$/$\mathcal{S}_{\text{dq-ti}}^*$;
\textbf{SQ}: sentence-level quality scorer
$\mathcal{S}_{\text{sq}}^*$.}
\label{tab:hyperparams}
\end{table}

\paragraph{OCG Construction.}
Embedding model: BGE-M3~\citep{bge-m3} (0.56B, multilingual).
Keyword fusion: $\alpha_s{=}\beta_s{=}0.5$; top-$T{=}5$ per document.
Document--concept: top-$M{=}3$ chains. Keyword--chain: top-1 per
keyword.
Evolution LLM $\mathcal{M}_{\text{evo}}$:
Doubao-Seed-2.0-Pro
(\texttt{doubao-seed-2-0-pro-260215}).
Evolution: $W{=}2{,}000$; patience $P{=}300$;
$\tau_{\text{conf}}{=}0.80$; $\tau_{\text{admit}}{=}0.90$;
$w_1{=}0.6$, $w_2{=}0.4$.

\section{Computational Resources}
\label{app:compute}

All continual pre-training experiments (\S\ref{sec:exp_rq2}) are
conducted on a cluster equipped with
48*NVIDIA H100 80\,GB HBM3 GPUs, Intel Xeon Platinum 8468 CPUs,
running Rocky Linux 8.6 (Green Obsidian). All other pipeline
stages, including data preprocessing, knowledge distillation
training and corpus-scale inference, OCG construction, and
\textsc{CortexBench} synthesis and evaluation, are performed on a
separate environment with three NVIDIA A800 80\,GB PCIe GPUs, Intel
Xeon Gold 6326 CPUs at 2.90\,GHz, running Ubuntu 20.04.6 LTS.

\section{Concept Chain Evolution Details}
\label{app:concept_details}

\subsection{Initial Concept Chain Set Structure}
\label{app:concept_hierarchy}

The initial Concept Chain Set $\mathcal{C}^{(0)}$ is organized as a
multi-level domain hierarchy, where each concept chain represents a
root-to-leaf path. Chains may have varying depths (typically 2--4
levels), reflecting different domains' natural granularity. For example,
\texttt{Natural Science>Physics>Optics} has three levels, while
\texttt{Sports>Basketball} has two. The hierarchy spans 12 top-level
domains (Table~\ref{tab:taxonomy_stats}), containing 196 leaf-level
concept chains in total.

A defining property of this structure is that every node at every level
of the hierarchy---from the root through intermediate nodes to leaf
chains---constitutes a valid domain identifier. Leaf-level nodes
represent the finest-grained domains, while higher-level nodes represent
progressively broader domains at coarser granularity. For instance,
given the chain \texttt{Natural Science>Physics>Optics}, the leaf node
\texttt{Optics} identifies a fine-grained domain, the intermediate node
\texttt{Natural Science>Physics} a broader one, and the root
\texttt{Natural Science} the broadest. This multi-resolution property is
directly leveraged in the inter-domain association computation
(\S\ref{sec:graph_construct}), enabling domain relationship analysis at
arbitrary resolution without any structural modification.

\subsection{Matching and Evolution Protocol}
\label{app:concept_prompt}

The evolution procedure instructs $\mathcal{M}_{\text{evo}}$ to:
(1)~extract 1--3 core domain topics from each document $d_t$;
(2)~match each to existing chains in the current Concept Chain Set
$\mathcal{C}^{(t-1)}$ (using full root-to-leaf paths), with a quality
baseline of confidence $\ge\tau_{\text{conf}}$ for normal matching;
(3)~propose new leaf-level chains \emph{only when no existing chain
reaches} $\tau_{\text{conf}}$, subject to the structural constraints
specified in \S\ref{app:evolution_constraints};
(4)~output JSON with \texttt{topics} and \texttt{results} (each
containing \texttt{chain}, \texttt{confidence}, \texttt{is\_new},
\texttt{reason}).
The current complete Concept Chain Set $\mathcal{C}^{(t-1)}$ is provided
to $\mathcal{M}_{\text{evo}}$ for matching and judgment. A five-step
internal workflow---semantic extraction, chain matching, quality
assessment, new chain addition (triggered only when needed), and ranked
output---guides $\mathcal{M}_{\text{evo}}$ through a structured
reasoning process. Matching existing chains is always the priority,
governed by principle \textbf{P1} (Existing Chain Priority), with new
chain proposals following the six generation rules of principle
\textbf{P2} (Appendix~\ref{app:evolution_constraints}). Newly added
chains are immediately available in subsequent evolution steps. 
The
complete evolution prompt is provided in the released codebase.

\subsection{Full Structural Constraints}
\label{app:evolution_constraints}

Candidate new chains must satisfy six constraints:
\textbf{(i)}~\emph{Domain abstractness} (highest weight): chains
represent abstract domains generalizing a class of things or a knowledge
domain, and must not refer to any specific named entity, including person
names, place names, organization names, brand names, product models, or
specific event names;
\textbf{(ii)}~\emph{Upper-level stability}: the top two levels of the
hierarchy remain unchanged unless absolutely necessary to accommodate a
fundamentally new domain;
\textbf{(iii)}~\emph{Leaf-level addition only}: new chains must append a
new leaf-level concept under an existing parent path, forming a complete
root-to-leaf path; it is strictly forbidden to use a path truncated at
an intermediate level as a new chain;
\textbf{(iv)}~\emph{Clear parent attachment}: the parent path must be
unambiguous, with the new chain belonging to exactly one logically clear
existing parent node;
\textbf{(v)}~\emph{Sibling distinguishability}: clear semantic
boundaries must exist between the new chain and its sibling concepts at
the same level; synonymous or near-synonymous concepts are forbidden;
\textbf{(vi)}~\emph{Granularity alignment}: the abstraction level of the
new chain must match that of its sibling nodes, being neither too coarse
nor too fine.

\subsection{Concept Expansion Protocol}
\label{app:expansion_prompt}

Each concept chain is expanded by an LLM into a natural-language Chinese
paragraph of approximately 300--500 characters, covering definitions or
core topics, the scope of subtopics, common discussion dimensions or
scenarios, and related keywords. The system prompt explicitly forbids
named entities (specific works, brands, people) and requires keywords to
be naturally integrated into the text rather than presented as a list.
The complete concept expansion prompt is presented in
Table~\ref{tab:concept_chain_expansion_prompt}.

\subsection{Convergence Monitoring}
\label{app:convergence}

Convergence is monitored via the sliding-window update rate $r_t$,
defined as the fraction of the most recent $W$ documents that triggered a
new chain proposal at evolution step $t$:
\begin{equation}
  r_t = \frac{1}{\min(t,W)}
    \sum_{j=\max(1,\,t-W+1)}^{t}
      \mathbb{I}\bigl[\mathcal{C}^{(j)}\neq\mathcal{C}^{(j-1)}\bigr].
  \label{eq:update_rate}
\end{equation}
We set $W{=}5{,}000$.
Evolution terminates under an early stopping criterion: when no proposal
occurs for $P$ consecutive documents (i.e., $r_t{=}0$ persists for a
window of size $P$), the Concept Chain Set is considered stabilized.
Figure~\ref{fig:convergence} plots $r_t$ over the evolution process; the
update rate exhibits an overall decreasing trend, confirming progressive
stabilization of the taxonomy.

\begin{figure}[t]
\centering
\includegraphics[width=\columnwidth]{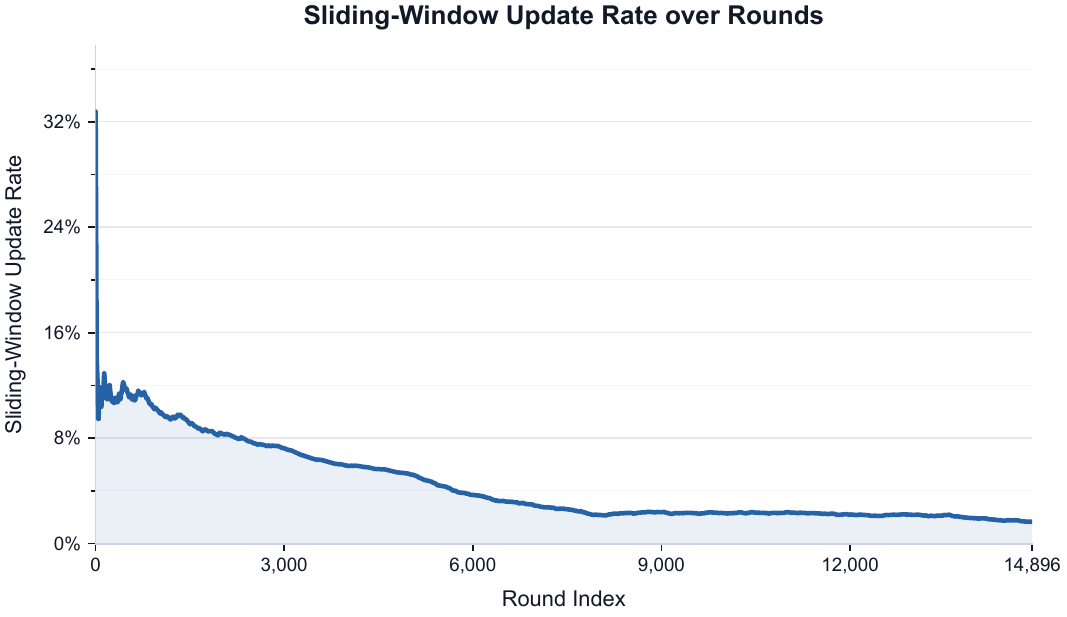}
\caption{Sliding-window update rate $r_t$
(Eq.~\ref{eq:update_rate}) during concept chain evolution.}
\label{fig:convergence}
\end{figure}

\section{Keyword Extraction Details}
\label{app:keyword_details}

Given a document $d\in\mathcal{D}_{\text{refined}}$, keyword extraction
proceeds through four stages: preprocessing, independent scoring by two
methods, score fusion, and top-$T$ selection.

\paragraph{Preprocessing.}
Traditional Chinese text is first converted to Simplified Chinese.
Documents undergo word segmentation and part-of-speech (POS) tagging via
jieba, a dictionary-based Chinese segmentation
toolkit that combines prefix dictionary scanning with an HMM-based model
for unknown word recognition. Stopwords are removed using a consolidated
Chinese stopword list~\citep{chinesestopwords} (merging Baidu, HIT, and
SCU resources).

\paragraph{POS Filtering.}
Let $\mathcal{W}_d^{\text{raw}}=(w_1,w_2,\ldots,w_{L_d})$ denote the
segmented token sequence of document $d$ with corresponding POS tags
$(p_1,p_2,\ldots,p_{L_d})$. We retain only tokens whose POS tag belongs
to the allowed set $\mathcal{P}_{\text{allow}}=\{$\texttt{n},
\texttt{nr}, \texttt{nrt}, \texttt{ns}, \texttt{nt}, \texttt{nz},
\texttt{ng}, \texttt{vn}, \texttt{an}, \texttt{eng}, \texttt{i},
\texttt{l}$\}$, producing the filtered token sequence
$\mathcal{W}_d = (w_{i_1},w_{i_2},\ldots,w_{i_{L'_d}})$ where
$p_{i_j}\in\mathcal{P}_{\text{allow}}$ for all $j$, and $L'_d$ is the
filtered length.

\paragraph{TF-IDF Scoring.}
For each unique keyword $k$ in $\mathcal{W}_d$, let
$f_{k,d}=|\{j: w_{i_j}=k\}|$ denote its frequency in the filtered
sequence. The raw TF-IDF score is:
\begin{equation}
  s_{\text{tfidf}}^{\text{raw}}(k,d)
    = \underbrace{\frac{f_{k,d}}{L'_d}}_{\text{TF}(k,d)}
      \;\cdot\;
      \underbrace{\log\frac{|\mathcal{D}_{\text{idf}}|+1}
                           {1+|\{d'\in\mathcal{D}_{\text{idf}}:
                             k\in d'\}|}}_%
      {\text{IDF}(k)},
  \label{eq:tfidf}
\end{equation}
where $\mathcal{D}_{\text{idf}}$ is a reference corpus from which IDF
values are pre-computed.

\paragraph{TextRank Scoring.}
Following TextRank~\cite{TextRank}, we construct an undirected
co-occurrence graph $G_d=(V_d,E_d)$ from $\mathcal{W}_d$: $V_d$
contains all unique keywords in $\mathcal{W}_d$, and an edge
$(k_i,k_j)\in E_d$ exists if $k_i$ and $k_j$ co-occur within a sliding
window of size $\omega{=}5$ tokens in $\mathcal{W}_d$. The TextRank
score is computed iteratively:
\begin{equation}
  \operatorname{TR}^{(l+1)}\!(k_i)
    = (1{-}\alpha) + \alpha
      \!\!\sum_{k_j\in\mathcal{N}(k_i)}\!\!
      \frac{1}{|\mathcal{N}(k_j)|}
      \operatorname{TR}^{(l)}\!(k_j),
  \label{eq:textrank}
\end{equation}
where $\alpha{=}0.85$ is the damping factor, $\mathcal{N}(k_i)$ denotes
the neighbor set of $k_i$ in $G_d$, and
$\operatorname{TR}^{(0)}(k_i)=1$ for all $k_i$. Iteration proceeds
until convergence
($\max_i|\operatorname{TR}^{(l+1)}(k_i)-
\operatorname{TR}^{(l)}(k_i)| <10^{-4}$), yielding
$s_{\text{tr}}^{\text{raw}}(k,d) = \operatorname{TR}(k)$.

\paragraph{Score Fusion and Selection.}
Each scoring method independently produces a ranked keyword list. We
apply per-document min-max normalization to each method:
\begin{equation}
  s_{\text{tfidf}}(k)
    = \frac{s_{\text{tfidf}}^{\text{raw}}(k)
      - \min_{k'} s_{\text{tfidf}}^{\text{raw}}(k')}
      {\max_{k'} s_{\text{tfidf}}^{\text{raw}}(k')
      - \min_{k'} s_{\text{tfidf}}^{\text{raw}}(k')},
  \label{eq:minmax_tfidf}
\end{equation}
and analogously for TextRank, yielding $s_{\text{tr}}(k)\in[0,1]$. The
fused score is:
\begin{equation}
  s_{\text{comb}}(k)
    = \alpha_s\,s_{\text{tfidf}}(k)
    + \beta_s\,s_{\text{tr}}(k),
  \label{eq:fusion}
\end{equation}
with $\alpha_s{=}\beta_s{=}0.5$. The final keyword set of document $d$
is obtained by selecting the top-$T$ keywords by fused score:
\begin{equation}
  \mathcal{K}_d = \operatorname{top\text{-}}T_{k\in V_d}\;
    s_{\text{comb}}(k),
  \label{eq:topT}
\end{equation}
with $T{=}5$. These keywords are then linked to concept chains via the keyword--concept linking formula (\S\ref{sec:keyword}).

\section{Properties of $\operatorname{Assoc}$}
\label{app:proofs}

\paragraph{Summation vs.\ Averaging.}
The accumulated signal vector $\mathbf{s}_p$ uses summation rather
than averaging over descendant leaves: a keyword connected to many
descendants naturally receives a stronger aggregated signal,
reflecting its broader relevance to the node's overall domain. If
averaging were used instead, a keyword appearing in a single
descendant would receive the same weight as one appearing across all
descendants, obscuring the breadth of its domain relevance.

\paragraph{Neighbor Definition.}
Two nodes $p_i$ and $p_j$ are \emph{neighbors} if their descendant
leaves share at least one active keyword, i.e.,
$\sum_k S_{p_i,k}\,S_{p_j,k}>0$.

\subsection{Leaf-Level Specialization and Degeneracy Consistency}
\label{app:proof_consistency}

When both nodes $p_i{=}c_i$ and $p_j{=}c_j$ are leaf-level concept
chains,
$\mathcal{C}_{c_i}^{\downarrow}=\{c_i\}$, hence
$\mathbf{s}_{c_i}=\mathbf{w}_{c_i}$ and $S_{c_i,k}=W_{c_i,k}$ for all
$k$. The generalized association (Eq.~\ref{eq:assoc}) reduces to:
\begin{equation}
  \operatorname{Assoc}(c_i,c_j)
    = \frac{\sum\nolimits_{k\in\mathcal{K}_{i,j}}
            W_{c_i,k}\,W_{c_j,k}}
           {\|\mathbf{w}_{c_i}\|\;\|\mathbf{w}_{c_j}\|}.
  \label{eq:leaf_sim}
\end{equation}
Two leaf chains are \emph{neighbors} if
$\mathcal{K}_{i,j}\neq\emptyset$.

\begin{proof}
Since $\mathcal{C}_{c_i}^{\downarrow}=\{c_i\}$, substituting into
Eq.~\eqref{eq:assoc} gives
$\operatorname{Assoc}(c_i,c_j)=
\frac{\sum_{k\in\mathcal{K}}W_{c_i,k}\,W_{c_j,k}}
     {\|\mathbf{w}_{c_i}\|\;\|\mathbf{w}_{c_j}\|}$.
Since $W_{c_i,k}\,W_{c_j,k}=0$ for all
$k\notin\mathcal{K}_{i,j}$, the sum over $\mathcal{K}$ collapses to the
sum over $\mathcal{K}_{i,j}$, yielding Eq.~\eqref{eq:leaf_sim}.
\end{proof}

\subsection{Pairwise Decomposition}
\label{app:proof_decomposition}

\begin{proposition}
The numerator of $\operatorname{Assoc}(p_i,p_j)$ decomposes as
\begin{equation}
  \begin{split}
    \sum_{k\in\mathcal{K}}S_{p_i,k}\,S_{p_j,k}
      &= \sum_{c\in\mathcal{C}_{p_i}^{\downarrow}}
         \sum_{c'\in\mathcal{C}_{p_j}^{\downarrow}}
         \!\!\sum_{k\in\mathcal{K}_c\cap\mathcal{K}_{c'}}
         \!\!W_{c,k}\,W_{c',k}.
  \end{split}
\end{equation}
\end{proposition}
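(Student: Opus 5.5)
The plan is to expand the product of the two accumulated sums directly and then interchange finite summations. By definition, $S_{p_i,k}=\sum_{c\in\mathcal{C}_{p_i}^{\downarrow}}W_{c,k}$ and $S_{p_j,k}=\sum_{c'\in\mathcal{C}_{p_j}^{\downarrow}}W_{c',k}$. For each fixed $k\in\mathcal{K}$, distributivity gives
\begin{equation*}
S_{p_i,k}\,S_{p_j,k}=\sum_{c\in\mathcal{C}_{p_i}^{\downarrow}}\sum_{c'\in\mathcal{C}_{p_j}^{\downarrow}}W_{c,k}\,W_{c',k}.
\end{equation*}
Sum this over $k\in\mathcal{K}$. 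All index sets are finite: $|\mathcal{C}|=403$ and $\mathcal{K}$ is the finite global vocabulary. We may therefore exchange the order of summation and move the sum over $k$ innermost, which yields $\sum_{c}\sum_{c'}\sum_{k\in\mathcal{K}}W_{c,k}W_{c',k}$.

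Next, the inner sum over $\mathcal{K}$ has to be restricted to $\mathcal{K}_c\cap\mathcal{K}_{c'}$. Here $\mathcal{K}_c$ is read as the set of keywords active for chain $c$, namely $\{k\in\mathcal{K}\mid \operatorname{F}(c,k)>0\}$. This is consistent with the notation $\mathcal{C}_k$ and with $\mathcal{K}_{i,j}=\mathcal{K}_{c_i}\cap\mathcal{K}_{c_j}$ in the leaf-level specialization of Appendix~\ref{app:proof_consistency}. The key observation is that $W_{c,k}=0$ whenever $k\notin\mathcal{K}_c$. In that case $\operatorname{F}(c,k)=0$, so the Typicality factor $P(k\mid c)=\operatorname{F}(c,k)/|\mathcal{D}_c|$ vanishes.

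The one subtlety is that the Fidelity factor $\operatorname{AC}(c,k)$ has $\operatorname{F}(c,k)$ in its denominator, which makes it formally $0/0$ when $\operatorname{F}(c,k)=0$. I would handle this by adopting the natural convention $W_{c,k}:=0$ when $\operatorname{F}(c,k)=0$. An equivalent justification is to note that the product $P(k\mid c)\operatorname{AC}(c,k)$ simplifies to $\sum_{d\in\mathcal{D}_c}\operatorname{conf}_d(k)\,\mathbb{I}[\phi_d(k)=c]/|\mathcal{D}_c|$, which is well defined and equals $0$ there. The $\operatorname{IDF}_{\text{chain}}$ factor is finite, because $|\mathcal{C}_k|\ge 1$ for every keyword that actually occurs.

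With this in place, $W_{c,k}W_{c',k}=0$ unless $k\in\mathcal{K}_c\cap\mathcal{K}_{c'}$. The inner sum over $\mathcal{K}$ therefore collapses to $\mathcal{K}_c\cap\mathcal{K}_{c'}$, exactly as in the leaf-level proof, and this gives the claimed identity. The only step needing care is this vanishing/convention argument; everything else is bilinearity of finite sums.
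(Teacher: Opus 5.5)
Your proof is correct and follows the same route as the paper. Like the paper, you expand $S_{p_i,k}S_{p_j,k}$ by distributivity, interchange the finite sums, and drop terms outside $\mathcal{K}_c\cap\mathcal{K}_{c'}$ because $W_{c,k}=0$ for $k\notin\mathcal{K}_c$; your treatment of the definition of $\mathcal{K}_c$ and of the $0/0$ in $\operatorname{AC}(c,k)$ when $\operatorname{F}(c,k)=0$ makes explicit a convention the paper leaves unstated.
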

\begin{proof}
By expanding the definitions:
$\sum_k S_{p_i,k}\,S_{p_j,k}
 =\sum_k\bigl(\sum_{c\in\mathcal{C}_{p_i}^{\downarrow}}
 W_{c,k}\bigr)
        \bigl(\sum_{c'\in\mathcal{C}_{p_j}^{\downarrow}}
        W_{c',k}\bigr)
 =\sum_{c}\sum_{c'}\sum_k W_{c,k}\,W_{c',k}$,
where the last equality follows from distributivity of multiplication
over addition. Only terms with
$k\in\mathcal{K}_c\cap\mathcal{K}_{c'}$ are non-zero, since
$W_{c,k}=0$ for $k\notin\mathcal{K}_c$.
\end{proof}

Each leaf-pair contribution
$\sum_{k\in\mathcal{K}_c\cap\mathcal{K}_{c'}}W_{c,k}\,W_{c',k}$ enters
the numerator without compression, ensuring full signal fidelity.

\subsection{Worked Example}
\label{app:assoc_example}

Consider computing the association between two intermediate-level nodes
$p_i$ and $p_j$, where:
\begin{align*}
p_i &= \text{Natural Science > Physics},\\
p_j &= \text{Technology\&Engineering > Engineering}.
\end{align*}
Suppose $p_i$ has two descendant leaf chains and $p_j$ likewise:
\begin{align*}
c_1 &= \text{Natural Science > Physics > Optics},\\
c_2 &= \text{Natural Science > Physics > Mechanics},\\
c_3 &= \text{Technology\&Engineering > Engineering}\\
    &\quad\text{> Civil Engineering},\\
c_4 &= \text{Technology\&Engineering > Engineering}\\
    &\quad\text{> Electrical Engineering}.
\end{align*}

The accumulated signal vectors are
$\mathbf{s}_{p_i}{=}\mathbf{w}_{c_1}{+}\mathbf{w}_{c_2}$ and
$\mathbf{s}_{p_j}{=}\mathbf{w}_{c_3}{+}\mathbf{w}_{c_4}$. By the
pairwise decomposition
(Appendix~\ref{app:proof_decomposition}), the numerator of
$\operatorname{Assoc}(p_i,p_j)$ decomposes into four leaf-pair dot
products:
$\mathbf{w}_{c_1}{\cdot}\mathbf{w}_{c_3}$,
$\mathbf{w}_{c_1}{\cdot}\mathbf{w}_{c_4}$,
$\mathbf{w}_{c_2}{\cdot}\mathbf{w}_{c_3}$, and
$\mathbf{w}_{c_2}{\cdot}\mathbf{w}_{c_4}$.

If, for instance, $c_2$ (Mechanics) and $c_3$ (Civil Engineering) share
keywords that carry high TFS weights on both chains---i.e., keywords
such as ``structural analysis,'' ``materials,'' and ``stress'' that are
strongly linked to both domains via large $W_{c_2,k}$ and $W_{c_3,k}$
values---their dot product
$\mathbf{w}_{c_2}{\cdot}\mathbf{w}_{c_3}
=\sum_{k\in\mathcal{K}_{c_2}\cap\mathcal{K}_{c_3}}
W_{c_2,k}\,W_{c_3,k}$ will dominate the sum. This demonstrates that
inter-domain association is determined not merely by the number of shared
keywords, but by the strength of each keyword's association with both
chains (reflected in the TFS weight products). The resulting high
$\operatorname{Assoc}(p_i,p_j)$ value is thus primarily driven by the
Mechanics--Civil Engineering overlap, providing directly interpretable
inter-domain semantics.

At the leaf level (Appendix~\ref{app:proof_consistency}), the
association between any two leaf chains reduces to the cosine similarity
between their TFS weight vectors.

\section{Concept Chain Taxonomy}
\label{app:taxonomy}

Table~\ref{tab:taxonomy_stats} summarizes the distribution of the 403
leaf-level concept chains across top-level domains after evolution and
admission control.

\begin{table}[t]
\centering\small
\begin{tabular}{lr}
\toprule
\textbf{Top-Level Domain} & \textbf{Leaf Chains}\\
\midrule
Culture \& Life & 34\\
Arts \& Entertainment & 37\\
Sports & 11\\
Travel \& Geography & 7\\
Health \& Medicine & 28\\
History & 8\\
Natural Science & 28\\
Philosophy \& Religion & 12\\
Society & 83\\
Business \& Economy & 44\\
Technology \& Engineering & 100\\
People \& Biography & 11\\
\midrule
\textbf{Total} & \textbf{403}\\
\bottomrule
\end{tabular}
\caption{Leaf chain distribution across the 12 top-level domains after
evolution and admission control.}
\label{tab:taxonomy_stats}
\end{table}

The full initial Concept Chain Set $\mathcal{C}^{(0)}$ (196 leaf-level
chains) and the post-evolution set $\mathcal{C}$ (403 leaf-level
chains) are released in our codebase and data package, in both
hierarchical and enumerated representations.

\section{Student Model Evaluation Details}
\label{app:kd_detail}

\paragraph{Metric Selection.}
For the temporal classifier $\mathcal{S}_{\text{tc}}^*$, macro-$F_1$
is used as the primary metric to ensure balanced evaluation across both
\textsc{Time\text{-}Rel} and \textsc{Time\text{-}Irrel}.
For the two document-level quality scorers
$\mathcal{S}_{\text{dq-tr}}^*$ and $\mathcal{S}_{\text{dq-ti}}^*$,
the target class is \textsc{HighQ}; we use $F_2$ (weighting recall
twice as heavily as precision) because the pipeline's goal is to
retain as many high-quality documents as possible. For the
sentence-level quality scorer $\mathcal{S}_{\text{sq}}^*$, the target
class is \textsc{Keep}
(\textsc{HighQ\_Sen}\,$\cup$\,\textsc{MidQ\_Sen}); we likewise use
Keep-$F_2$ to maximize retention of informative sentences.
Spearman rank correlation ($\rho$) between student predictions and
teacher consensus soft scores is reported for the three quality scoring
tasks to assess whether the student preserves the fine-grained ordinal
ranking produced by the teacher ensemble. The consistently high
$\rho$ values (0.805, 0.922, and 0.908) confirm that the OAR loss
(\S\ref{sec:oar_loss}) effectively trains the student to reproduce both
the teacher's classification decisions and the underlying continuous
scoring standard. The calibrated decision thresholds
(Appendix~\ref{app:threshold}) are then applied to these well-ordered
scores for binary retention decisions during corpus-scale inference.

\section{Corpus Distribution Analysis}
\label{app:corpus_dist}

\subsection{Quality $\times$ Temporal Cross Distribution}
\label{app:quality_temporal}

Table~\ref{tab:cross_dist} presents the joint distribution of quality
tiers and temporal categories across the full preprocessed corpus
$\mathcal{D}_0$.

\begin{table}[t]
\centering
\small
\setlength{\tabcolsep}{0pt}
\renewcommand{\arraystretch}{1.06}

\begin{tabular*}{\columnwidth}{@{\extracolsep{\fill}} l l r r r r r @{}}
\toprule
\textbf{Qual.}
  & \textbf{Temp.}
  & \textbf{Docs}
  & \textbf{Doc\%}
  & \textbf{Tokens}
  & \textbf{Tok\%}
  & \textbf{Avg Tok.} \\
\midrule
\multirow{2}{*}{\textsc{HighQ}}
  & \textsc{TI}
  & 9.67\,M
  & 13.22\%
  & 17.19\,B
  & 13.06\%
  & 1{,}778 \\
  & \textsc{TR}
  & 5.46\,M
  & 7.47\%
  & 9.99\,B
  & 7.59\%
  & 1{,}828 \\
\midrule
\multirow{2}{*}{\textsc{MidQ}}
  & \textsc{TI}
  & 9.14\,M
  & 12.50\%
  & 12.22\,B
  & 9.29\%
  & 1{,}337 \\
  & \textsc{TR}
  & 9.15\,M
  & 12.52\%
  & 11.34\,B
  & 8.62\%
  & 1{,}239 \\
\midrule
\multirow{2}{*}{\textsc{LowQ}}
  & \textsc{TI}
  & 17.69\,M
  & 24.19\%
  & 62.46\,B
  & 47.46\%
  & 3{,}530 \\
  & \textsc{TR}
  & 22.02\,M
  & 30.11\%
  & 18.41\,B
  & 13.99\%
  & 836 \\
\midrule
\multicolumn{2}{@{}l}{Total}
  & 73.15\,M
  & 100\%
  & 131.63\,B
  & 100\%
  & 1{,}800 \\
\bottomrule
\end{tabular*}

\caption{Joint distribution of quality tiers and temporal categories in
\(\mathcal{D}_0\) 73.15\,M documents, 131.63\,B tokens.
\textsc{TI}: \textsc{Time\text{-}Irrel};
\textsc{TR}: \textsc{Time\text{-}Rel}.
Qual.: Quality; Temp.: Temporal; Docs: Documents;
Tok\%: Token percentage; Avg Tok.: Average tokens.}
\label{tab:cross_dist}
\end{table}

Notably, \textsc{LowQ} documents exhibit the highest average token
length (2{,}036 overall; Table~\ref{tab:quality_dist}), indicating
that document length alone is a poor proxy for quality.
Several patterns are noteworthy. Time-invariant \textsc{LowQ}
documents are disproportionately long (avg 3{,}530 tokens), likely
reflecting verbose, low-information-density web pages such as
repetitive template-generated content or promotional material.
Conversely, time-related \textsc{LowQ} content is short (avg 836
tokens), consistent with low-quality news snippets or brief social
media reposts. Within the \textsc{HighQ} tier, both temporal categories
show similar average lengths (${\sim}$1{,}800 tokens), suggesting that
quality assessment is largely independent of temporal characteristics
once the content passes the quality threshold.

\subsection{High-Quality Corpus Temporal Distribution}
\label{app:dhq_temporal}

Table~\ref{tab:dhq_temporal} summarizes the temporal breakdown of the
high-quality corpus $\mathcal{D}_{\text{dhq}}$.

\begin{table}[t]
\centering
\small
\setlength{\tabcolsep}{0pt}
\renewcommand{\arraystretch}{1.06}

\begin{tabular*}{\columnwidth}{@{\extracolsep{\fill}} l r r r r r @{}}
\toprule
\textbf{Temporal}
  & \textbf{Docs}
  & \textbf{Doc\%}
  & \textbf{Tokens}
  & \textbf{Tok\%}
  & \textbf{Avg Tok} \\
\midrule
\textsc{TI}
  & 9.67\,M
  & 63.91\%
  & 17.19\,B
  & 63.24\%
  & 1{,}778 \\
\textsc{TR}
  & 5.46\,M
  & 36.09\%
  & 9.99\,B
  & 36.76\%
  & 1{,}828 \\
\midrule
Total
  & 15.13\,M
  & 100\%
  & 27.18\,B
  & 100\%
  & 1{,}796 \\
\bottomrule
\end{tabular*}

\caption{Temporal distribution of \(\mathcal{D}_{\text{dhq}}\)
(high-quality subset). \textsc{TI}/\textsc{TR}: \textsc{Time\text{-}Irrel}
/ \textsc{Time\text{-}Rel}.}
\label{tab:dhq_temporal}
\end{table}

While the temporal dimension is nearly balanced in $\mathcal{D}_0$
(49.91\% \textsc{TI}; Table~\ref{tab:quality_dist}), it shifts to
63.91\% \textsc{TI} in $\mathcal{D}_{\text{dhq}}$, indicating that
time-invariant content accounts for a relatively higher proportion among
high-quality documents compared to the overall corpus. Within
$\mathcal{D}_{\text{dhq}}$, both temporal categories exhibit comparable
average lengths (1{,}778 tokens for \textsc{TI} vs.\ 1{,}828 tokens for
\textsc{TR}), indicating that the quality threshold is content-driven
rather than length-driven regardless of temporal category.

\subsection{Domain Distribution of High-Quality Documents}
\label{app:domain_dist}

Table~\ref{tab:domain_full} shows the distribution of document--concept
associations across the 12 top-level domains for
$\mathcal{D}_{\text{dhq}}$. Since each document is associated with
top-$M{=}3$ concept chains (\S\ref{sec:doc_concept}), the total count
(${\sim}$45.04\,M) is approximately
$3\times|\mathcal{D}_{\text{dhq}}|$.

\begin{table*}[t]
\centering\small
\begin{tabular}{@{}rl rr rr rr@{}}
\toprule
\textbf{Rk} & \textbf{Top-Level Domain}
  & \textbf{Assoc.}  & \textbf{\%}
  & \textbf{TI}      & \textbf{TI\%}
  & \textbf{TR}      & \textbf{TR\%} \\
\midrule
 1 & Technology \& Engineering & 15.67\,M & 34.80\% & 11.79\,M & 75.20\% & 3.89\,M & 24.80\% \\
 2 & Society                   &  6.89\,M & 15.29\% &  3.00\,M & 43.58\% & 3.89\,M & 56.42\% \\
 3 & Business \& Economy       &  5.88\,M & 13.05\% &  2.66\,M & 45.26\% & 3.22\,M & 54.74\% \\
 4 & Health \& Medicine        &  3.95\,M &  8.77\% &  3.08\,M & 78.06\% & 0.87\,M & 21.94\% \\
 5 & Culture \& Life           &  3.77\,M &  8.36\% &  2.86\,M & 76.05\% & 0.90\,M & 23.95\% \\
 6 & Arts \& Entertainment     &  3.23\,M &  7.17\% &  2.33\,M & 72.22\% & 0.90\,M & 27.78\% \\
 7 & Sports                    &  3.02\,M &  6.70\% &  0.99\,M & 32.96\% & 2.02\,M & 67.04\% \\
 8 & Natural Science           &  0.79\,M &  1.76\% &  0.67\,M & 85.07\% & 0.12\,M & 14.93\% \\
 9 & Philosophy \& Religion    &  0.62\,M &  1.38\% &  0.56\,M & 90.45\% & 59.3\,K &  9.55\% \\
10 & People \& Biography       &  0.48\,M &  1.07\% &  0.25\,M & 52.23\% & 0.23\,M & 47.77\% \\
11 & History                   &  0.48\,M &  1.07\% &  0.43\,M & 90.22\% & 46.9\,K &  9.78\% \\
12 & Travel \& Geography       &  0.26\,M &  0.58\% &  0.13\,M & 49.60\% & 0.13\,M & 50.40\% \\
\midrule
   & \textbf{Total}           & 45.04\,M & 100\%  & 28.78\,M & 63.88\% & 16.27\,M & 36.12\% \\
\bottomrule
\end{tabular}
\caption{Domain distribution of high-quality documents
($\mathcal{D}_{\text{dhq}}$) across 12 top-level domains.
\textbf{Assoc.}: document--concept association count.
\textbf{TI}/\textbf{TR}: \textsc{Time\text{-}Irrel} /
\textsc{Time\text{-}Rel} breakdown within each domain.}
\label{tab:domain_full}
\end{table*}

The temporal composition varies dramatically across domains.
Knowledge-intensive domains such as Natural Science (85.07\% TI),
History (90.22\%), and Philosophy \& Religion (90.45\%) are
overwhelmingly time-invariant, while event-driven domains such as
Sports (67.04\% TR) and Society (56.42\% TR) are predominantly
time-related. This validates the pipeline's temporal classification and
confirms that the OCG's domain organization captures meaningful
structural differences in web content.

\section{Continual Pre-Training Details}
\label{app:cpt_detail}

\subsection{CPT Validation Design}
\label{app:cpt_validation_design}

Concretely, the CPT experiments are designed to validate three aspects:
(a)~corpus quality, OCG domain organization, and domain-specific
retrieval accuracy through domain-specific performance improvements;
(b)~the OCG's inter-domain association modeling through the incremental
benefit of neighbor-domain data; and (c)~robustness of these benefits
across model families and scales.

All CPT experiments exclusively use \textsc{Time\text{-}Irrel} data, as
time-invariant content provides durable domain knowledge well-suited for
continual pre-training.

\paragraph{Scaling and Cross-Family Robustness.}
The four seed models spanning 0.5\,B to 8\,B parameters and two
model families (Qwen vs.\ Llama) enable analysis of scaling behavior
(0.5\,B to 8\,B) and cross-family robustness (Qwen vs.\ Llama),
verifying that the quality signal is intrinsic to the
\textsc{Cortex}-curated data rather than an artifact of specific
model-data interactions. Consistent improvement patterns across
scales and architectures further confirm the robustness of the
pipeline's quality filtering and the OCG's domain organization.

\subsection{OCG Neighbor Chains for Finance}
\label{app:finance_neighbors}

Table~\ref{tab:finance_neighbors} lists the top-10 OCG neighbor chains
for $c_{\text{fin}}$ (\texttt{Business \& Economy\,>\,Finance}), ranked
by $\operatorname{Assoc}(c_{\text{fin}},\cdot)$
(Eq.~\ref{eq:assoc}).

\begin{table}[t]
\centering\small
\begin{tabular}{@{}cl@{}}
\toprule
\textbf{Rank} & \textbf{Neighbor Chain} \\
\midrule
 1 & Business \& Economy\,>\,Corporate Mgmt. \\
 2 & Business \& Economy\,>\,Industrial Econ. \\
 3 & Society\,>\,Law \\
 4 & Business \& Economy\,>\,Consumer \& Retail \\
 5 & Business \& Economy\,>\,Real Estate \\
 6 & Tech.\,\& Eng.\,>\,Computer \& IT \\
 7 & People \& Biography\,>\,Entrepreneurs \\
 8 & Business \& Economy\,>\,Trade \\
 9 & Society\,>\,Career \& Employment \\
10 & Business \& Economy\,>\,Theoretical Econ. \\
\bottomrule
\end{tabular}
\caption{Top-10 OCG neighbor chains for $c_{\text{fin}}$, spanning 4
distinct top-level domains. This confirms that the OCG effectively
models latent associations across different domains.}
\label{tab:finance_neighbors}
\end{table}

\subsection{CPT Training Hyperparameters}
\label{app:cpt_hyperparams}

Table~\ref{tab:cpt_hparams} summarizes the CPT training configuration.

\begin{table}[t]
\centering\small
\begin{tabular}{@{}ll@{}}
\toprule
\textbf{Parameter} & \textbf{Value} \\
\midrule
Training mode      & Full-parameter \\
Learning rate      & $2{\times}10^{-6}$ \\
LR scheduler       & cosine \\
Warmup ratio       & 0.05 \\
Batch size (eff.)  & 128 \\
Precision          & bf16 \\
Max seq length     & 4{,}096 \\
Inference engine   & vLLM \\
Eval (CFBenchmark) & Official toolkit \\
\bottomrule
\end{tabular}
\caption{CPT training configuration (shared across all seed models
and data configurations).}
\label{tab:cpt_hparams}
\end{table}

\subsection{Domain Capability Benchmark}
\label{app:cfb_benchmark}

The domain capability benchmark referred to as ``CFBenchmark'' in the
main text corresponds to
CFBenchmark-OpenFinData~\cite{CFBenchmark}, which evaluates financial
domain capabilities across six dimensions. The column abbreviations
used in Table~\ref{tab:cpt_results} are:
\textbf{Know.}:~Knowledge,
\textbf{Calc.}:~Calculation,
\textbf{Expl.}:~Explanation,
\textbf{Ident.}:~Identification,
\textbf{Anal.}:~Analysis, and
\textbf{Compl.}:~Compliance.
Each dimension targets a distinct aspect of financial language
understanding and reasoning. All evaluations are conducted using the
official CFBenchmark evaluation toolkit.

\section{\textsc{CortexBench} Details}
\label{app:synth_detail}

\subsection{Benchmark Design Rationale}
\label{app:bench_rationale}

\textsc{CortexBench} is a cross-domain search-and-reasoning QA
dataset in which each instance requires integrating evidence from two
documents belonging to different yet OCG-associated domains. Each
instance is paired with a dedicated candidate document pool
($|\mathcal{S}_{\text{cand}}|{=}6{,}000$) and a deterministic hybrid
search interface. Unlike settings that rely on live web search APIs,
where dynamic and opaque retrieval environments hinder fair
comparisons and reproducibility, and make it difficult to isolate
specific system components or attribute the sources of model
failures~\citep{BrowseComp-Plus}, this design provides a stable,
transparent interaction environment that enables reproducible
evaluation and fine-grained attribution of model weaknesses to
specific capabilities such as query formulation, evidence retrieval,
or cross-domain reasoning.
Concretely, this enables attribution of weaknesses to specific
capabilities such as query formulation, evidence retrieval, or
cross-domain reasoning.

The candidate pool comprises four equal-sized partitions: documents
from the two evidence chains $c_1$ and $c_2$ (including the gold
evidence $d_A$ and $d_B$), and documents from two randomly selected
chains $c_{r_1},c_{r_2}\in\mathcal{C}\setminus\{c_1,c_2\}$. This
design creates a controlled yet challenging retrieval setting: the
gold evidence documents are embedded among same-domain distractors
that share topical similarity, while cross-domain and random
distractors test the model's retrieval precision and noise robustness.

\subsection{Synthesis Pipeline}
\label{app:synth_pipeline}

Algorithm~\ref{alg:synthesis} formalizes the OCG-driven evidence pair
selection, QA synthesis, and per-QA-instance candidate set construction
described in \S\ref{sec:bench_construct}. Let
$\mathcal{M}_{\text{ext}}$ denote the LLM-based entity extraction
model, $\mathcal{E}_{\text{m3}}$ the BGE-M3 embedding
model~\citep{bge-m3}, and $\mathcal{M}_{\text{syn}}$ the QA
synthesis LLM. Both $\mathcal{M}_{\text{ext}}$ and
$\mathcal{M}_{\text{syn}}$ are implemented using
GPT-5.4.

\begin{algorithm}[t]
\caption{OCG-Guided QA Synthesis and Candidate Set Construction}
\label{alg:synthesis}
\small
\begin{algorithmic}[1]
\Require Keyword vocabulary $\mathcal{K}$; concept chains $\mathcal{C}$;
  inverted index $\{\mathcal{D}_c\}$;
  $\operatorname{Assoc}(\cdot,\cdot)$;
  entity extractor $\mathcal{M}_{\text{ext}}$;
  embedding model $\mathcal{E}_{\text{m3}}$;
  QA synthesis LLM $\mathcal{M}_{\text{syn}}$;
  pool size $s{=}6{,}000$
\Ensure QA set $\{(q, a, d_A, d_B,
  \mathcal{S}_{\text{cand}})\}$
\For{each selected keyword $k \in \mathcal{K}$}
  \State $\mathcal{C}_k \gets
    \{c\in\mathcal{C}\mid\operatorname{F}(c,k)>0\}$
  \State Select $(c_1,c_2)$ from $\mathcal{C}_k$
    (high-assoc.\ or low-assoc.)
  \State $\mathcal{D}_{c_i}^{\text{top}} \gets
    \operatorname{top\text{-}}N_{d\in\mathcal{D}_{c_i}}
    \alpha(d,c_i)$, $i\in\{1,2\}$
  \For{each $d_A \in
    \operatorname{sample}(\mathcal{D}_{c_1}^{\text{top}}, 5)$}
    \State $\mathcal{E}_A \gets
      \mathcal{M}_{\text{ext}}(d_A)$
      \Comment{Extract entities \& attributes}
    \If{$\max_{e\in\mathcal{E}_A}\operatorname{score}(e)
      < \tau_{\text{ent}}$}
      \State \textbf{continue}
      \Comment{Skip low-salience $d_A$}
    \EndIf
    \State $\mathcal{B} \gets
      \operatorname{top\text{-}}n_{d\in
      \mathcal{D}_{c_2}^{\text{top}}}
      \cos\bigl(\mathcal{E}_{\text{m3}}(\mathcal{E}_A),\;
      \mathcal{E}_{\text{m3}}(d)\bigr)$
    \For{each $d_B \in \mathcal{B}$}
      \State $(q,a) \gets
        \mathcal{M}_{\text{syn}}(d_A, d_B, \text{type})$,
        $\text{type}\in\{\text{bridge},\text{comparison}\}$
      \If{quality filter passes}
        \State \textbf{Build candidate pool:}
        \State \quad $\mathcal{S}_{c_1} \gets
          \{d_A\}\cup
          \operatorname{sample}\bigl(\mathcal{D}_{c_1}
          \!\setminus\!\{d_A\},\;s/4{-}1\bigr)$
        \State \quad $\mathcal{S}_{c_2} \gets
          \{d_B\}\cup
          \operatorname{sample}\bigl(\mathcal{D}_{c_2}
          \!\setminus\!\{d_B\},\;s/4{-}1\bigr)$
        \State \quad Select random $c_{r_1},c_{r_2}
          \in\mathcal{C}\setminus\{c_1,c_2\}$
        \State \quad $\mathcal{S}_{\text{cand}} \gets
          \mathcal{S}_{c_1}\cup\mathcal{S}_{c_2}\cup
          \operatorname{sample}(\mathcal{D}_{c_{r_1}},s/4)\cup
          \operatorname{sample}(\mathcal{D}_{c_{r_2}},s/4)$
        \State \textbf{emit}
          $(q,a,d_A,d_B,\mathcal{S}_{\text{cand}})$;
          \textbf{break}
        \Comment{$d_A$ yields $\le$1 QA}
      \EndIf
    \EndFor
  \EndFor
\EndFor
\end{algorithmic}
\end{algorithm}

In line~10, $n{=}5$ for bridge and $n{=}10$ for comparison questions.
Each keyword $k$ contributes at most 5 candidate $d_A$ documents, and
each $d_A$ generates at most one QA pair (line~19 breaks after the
first successful synthesis). The candidate pool size per chain is
$N{=}40$. Within the bridge and comparison categories, a small
proportion (${\sim}$10\%) of QA pairs have their premises deliberately
modified to contain factual errors; for these instances, a model is
judged correct only if it identifies the erroneous premise, further
evaluating the model's reasoning capability.

\paragraph{QA Types and Quality Filtering.}
Two QA types are synthesized: \emph{bridge} questions require chaining
facts across $d_A$ and $d_B$, while \emph{comparison} questions
require contrasting entities or attributes across the two documents.
A quality filter rejects QA pairs whose answers cannot be traced back
to the evidence documents, ensuring that every retained question is
answerable given the source material.

\subsection{Benchmark Statistics}
\label{app:bench_stats}

The 917 QA pairs are organized along three dimensions:
(i)~\emph{question type}: bridge and comparison
(Appendix~\ref{app:synth_pipeline});
(ii)~\emph{temporal source}: whether evidence documents are drawn
from the \textsc{Time\text{-}Irrel} or \textsc{Time\text{-}Rel}
subset of $\mathcal{D}_{\text{refined}}$; and
(iii)~\emph{chain-pair association level}: whether the two evidence
chains have high or low $\operatorname{Assoc}$ scores
(\S\ref{sec:graph_construct}).
Table~\ref{tab:bench_stats} summarizes the composition.

\begin{table}[t]
\centering\small
\begin{tabular}{@{}ll r@{}}
\toprule
\textbf{Dimension} & \textbf{Category} & \textbf{Count} \\
\midrule
\multirow{2}{*}{QA Type}
  & Bridge         & 320 \\
  & Comparison     & 597 \\
\midrule
\multirow{2}{*}{Temporal Source}
  & \textsc{TI} source & 544 \\
  & \textsc{TR} source & 373 \\
\midrule
\multirow{2}{*}{Association Level}
  & High-association pairs & 512 \\
  & Low-association pairs  & 405 \\
\midrule
\multicolumn{2}{l}{\textbf{Total}} & \textbf{917} \\
\bottomrule
\end{tabular}
\caption{\textsc{CortexBench} composition across QA type, temporal
source, and chain-pair association level.}
\label{tab:bench_stats}
\end{table}

\subsection{Search Interface and Evaluation Protocol}
\label{app:search_impl}

\paragraph{Evaluated Models.}
We evaluate eight frontier LLMs spanning four providers:
GPT-4o and GPT-5.4 (OpenAI);
Claude Sonnet~4.6 and Claude Opus~4.6 (Anthropic);
Gemini~3~Pro and Gemini~3.1~Pro (Google);
DeepSeek-V3.2 and DeepSeek-V4-Flash (DeepSeek).
Accuracy is evaluated by an LLM-based judge that assesses semantic
equivalence between the predicted and gold answers
(Appendix~\ref{app:judge}). Evidence Hit@$k$ measures whether at
least one gold evidence document appears among the top-$k$ retrieved
chunks (Appendix~\ref{app:bench_hit}).

\paragraph{Candidate Pool Construction.}
For each QA instance $(q,a,d_A,d_B,\mathcal{S}_{\text{cand}})$, the
candidate pool $\mathcal{S}_{\text{cand}}$
($|\mathcal{S}_{\text{cand}}|{=}6{,}000$) is constructed during
synthesis (Algorithm~\ref{alg:synthesis}, lines~14--18). Each QA pair
has its own dedicated candidate pool.

\paragraph{Document Chunking.}
Because the embedding model used for retrieval
(BGE-Large-zh~\citep{BGE-large-zh}) has a maximum input length of 512
tokens, each document $d\in\mathcal{S}_{\text{cand}}$ exceeding 512
tokens is split into non-overlapping chunks of at most 512 tokens. All
subsequent retrieval operates at the chunk level.

\paragraph{Hybrid Scoring.}
For a query $(\mathbf{q}_{\text{kw}}, q_{\text{sum}})$ and each chunk
$d_j$, two scores are computed:
$s_{\text{sem}}(d_j) = \cos\bigl(
  \mathcal{E}_{\text{bge}}(q_{\text{sum}}),\;
  \mathcal{E}_{\text{bge}}(d_j)\bigr)$
using BGE-Large-zh (1{,}024-dim, 512 max tokens), and
$s_{\text{lex}}(d_j) = \operatorname{BM25}(\mathbf{q}_{\text{kw}},
d_j)$ with default parameters ($k_1{=}1.5$, $b{=}0.75$).
Each score is min-max normalized within the pool, and the final score is
$s(d_j) = 0.5\,\hat{s}_{\text{sem}}(d_j) +
0.5\,\hat{s}_{\text{lex}}(d_j)$.
The function returns the top-$m$ chunks ranked by $s$.

\paragraph{Caching.}
Chunk embeddings for each candidate pool are pre-computed and cached.
BM25 inverted indices are built per pool and reused across search
rounds within the same QA instance.

\paragraph{Evaluation Protocol.}
Algorithm~\ref{alg:eval} formalizes the evaluation procedure for
\textsc{CortexBench}. The evaluated model $\mathcal{M}$ receives only
the question $q$ and must autonomously extract a keyword list
$\mathbf{q}_{\text{kw}}$, compose a natural-language summary
$q_{\text{sum}}$, and specify the number of chunks to retrieve $m$
($m{\le}5$) before each search call. In the \textsc{Multi-Search}
setting, the model may issue up to 3 rounds of search but may also
choose to generate an answer at any earlier round based on the
accumulated evidence.

\begin{algorithm}[t]
\caption{\textsc{CortexBench} Evaluation Protocol}
\label{alg:eval}
\small
\begin{algorithmic}[1]
\Require QA instance $(q,a,d_A,d_B,\mathcal{S}_{\text{cand}})$;
  evaluated model $\mathcal{M}$; setting
  $\in\{\textsc{Closed},\textsc{1-Srch},\textsc{M-Srch},
  \textsc{Oracle}\}$
\Ensure Correctness $\in\{0,1\}$
\If{setting $=$ \textsc{Closed}}
  \State $\hat{a} \gets \mathcal{M}(q)$
  \Comment{Parametric knowledge only}
\ElsIf{setting $=$ \textsc{Oracle}}
  \State $\hat{a} \gets \mathcal{M}(q,\,d_A,\,d_B)$
  \Comment{Gold evidence provided}
\Else
  \State $(\mathbf{q}_{\text{kw}},\,q_{\text{sum}},\,m)
    \gets \mathcal{M}(q)$,
    \quad $m \le 5$
  \Comment{Model sees only $q$}
  \State $\mathcal{R} \gets
    \operatorname{Search}(\mathbf{q}_{\text{kw}},
    q_{\text{sum}},m;\;\mathcal{S}_{\text{cand}})$
  \If{setting $=$ \textsc{1-Srch}}
    \State $\hat{a} \gets \mathcal{M}(q,\,\mathcal{R})$
  \Else
    \Comment{\textsc{M-Srch}: up to 3 rounds via ReAct}
    \State $\mathcal{R}_{\text{all}} \gets \mathcal{R}$;\;
      $r \gets 1$
    \While{$r < 3$}
      \If{$\mathcal{M}$ decides to answer}
        \textbf{break}
      \EndIf
      \State $(\mathbf{q}_{\text{kw}}',q_{\text{sum}}',m')
        \gets \mathcal{M}(q,\mathcal{R}_{\text{all}})$,
        \;$m'\le 5$
      \State $\mathcal{R}' \gets
        \operatorname{Search}(\mathbf{q}_{\text{kw}}',
        q_{\text{sum}}',m';\;\mathcal{S}_{\text{cand}})$
      \State $\mathcal{R}_{\text{all}} \gets
        \mathcal{R}_{\text{all}}\cup\mathcal{R}'$;\;
        $r \gets r+1$
    \EndWhile
    \State $\hat{a} \gets
      \mathcal{M}(q,\,\mathcal{R}_{\text{all}})$
  \EndIf
\EndIf
\State \textbf{return}
  $\operatorname{Judge}(q,\,a,\,\hat{a})$
\end{algorithmic}
\end{algorithm}

\subsection{LLM-as-Judge Evaluation}
\label{app:judge}

We use GPT-5.4 as the evaluator. Given the
original question $q$, the gold answer $a$, and the predicted answer
$\hat{a}$, the evaluator assesses semantic equivalence rather than
exact string matching, outputting a binary judgment
(correct/incorrect). The evaluator prompt template is specified in our
released codebase.

\subsection{Evidence Hit Rate Analysis}
\label{app:bench_hit}

Table~\ref{tab:evidence_hit} reports Evidence Hit@$k$ for each model.
Evidence Hit@$k$ is the fraction of instances where at least one gold
evidence document ($d_A$ or $d_B$) appears among the top-$k$ retrieved
chunks.

\begin{table}[t]
\centering
\small
\setlength{\tabcolsep}{4pt}
\renewcommand{\arraystretch}{1.08}

\begin{tabularx}{\columnwidth}{@{}l *{4}{Y}@{}}
\toprule
& \multicolumn{2}{c}{\textbf{Bridge}}
& \multicolumn{2}{c}{\textbf{Comparison}} \\
\cmidrule(lr){2-3}\cmidrule(lr){4-5}
\textbf{Model}
  & Hit@5 & Hit@15
  & Hit@5 & Hit@15 \\
\midrule
DeepSeek-V3.2            & 22.7 & 31.4 & 28.6 & 45.2 \\

DeepSeek-V4-Flash        & 23.1 & 31.6 & 30.5 & 45.5 \\
Gemini 3 Pro       & 22.5 & \textbf{55.0} & 24.6 & \textbf{50.9} \\
Gemini 3.1 Pro     & \textbf{38.4} & 14.1 & \textbf{46.0} & 12.6 \\
Claude Sonnet~4.6    & 20.3 & \underline{41.1} & 29.8 & 35.7 \\
Claude Opus~4.6     & 20.6 & 40.8 & 29.1 & \underline{49.2} \\
GPT-4o             & \underline{33.4} & 27.9 & \underline{36.6} & 33.9 \\

GPT-5.4            & 24.4 & 28.6 & 29.6 & 39.5 \\
\bottomrule
\end{tabularx}

\caption{Evidence Hit rates on \textsc{CortexBench}.
\textbf{Hit@5}: 1-Search top-5.
\textbf{Hit@15}: Multi-Search cumulative (\(\le\)3\(\times\)5).
}
\label{tab:evidence_hit}
\end{table}

\subsection{Results by Association Level and Temporal Source}
\label{app:bench_breakdown}

Table~\ref{tab:bench_combined_breakdown} provides accuracy breakdowns
by chain-pair association level and temporal source data.

\begin{table*}[t]
\centering
\small
\setlength{\tabcolsep}{2pt}
\renewcommand{\arraystretch}{1.08}

\begin{tabularx}{\textwidth}{@{}l *{16}{Y}@{}}
\toprule
& \multicolumn{8}{c}{\textbf{By Chain-Pair Association Level}}
& \multicolumn{8}{c}{\textbf{By Temporal Source}} \\
\cmidrule(lr){2-9}\cmidrule(lr){10-17}
& \multicolumn{4}{c}{\textbf{High-Association}}
& \multicolumn{4}{c}{\textbf{Low-Association}}
& \multicolumn{4}{c}{\textbf{\textsc{Time\text{-}Irrel} Source}}
& \multicolumn{4}{c}{\textbf{\textsc{Time\text{-}Rel} Source}} \\
\cmidrule(lr){2-5}\cmidrule(lr){6-9}
\cmidrule(lr){10-13}\cmidrule(lr){14-17}
\textbf{Model}
  & \textsc{Cls.} & \textsc{1-S} & \textsc{M-S} & \textsc{Orc.}
  & \textsc{Cls.} & \textsc{1-S} & \textsc{M-S} & \textsc{Orc.}
  & \textsc{Cls.} & \textsc{1-S} & \textsc{M-S} & \textsc{Orc.}
  & \textsc{Cls.} & \textsc{1-S} & \textsc{M-S} & \textsc{Orc.} \\
\midrule
DeepSeek-V3.2
  & 24.0 & 34.2 & 48.4 & \textbf{96.0}
  & 26.9 & 31.9 & 49.6 & 92.4
  & 27.2 & 33.8 & 48.5 & \underline{93.6}
  & 22.5 & 32.2 & 49.6 & \textbf{95.4} \\

DeepSeek-V4-Flash
  & 25.8 & 35.5 & 50.0 & 91.5
  & 28.4 & 34.6 & 51.9 & 92.3
  & 27.9 & 38.4 & 52.2 & 91.4
  & 25.5 & 30.3 & 48.8 & 92.5 \\
Gemini 3 Pro
  & 29.3 & 34.6 & \underline{57.4} & 72.7
  & 29.6 & 32.6 & 52.8 & 78.2
  & 32.9 & 36.4 & 54.4 & 72.3
  & 24.4 & 29.8 & \underline{56.7} & 79.2 \\
Gemini 3.1 Pro
  & 26.8 & \textbf{61.4} & 44.7 & 68.2
  & 24.2 & 35.1 & 44.4 & 75.8
  & 27.9 & \textbf{58.3} & 48.2 & 72.5
  & 22.3 & \textbf{59.4} & 39.4 & 70.3 \\
Claude Sonnet 4.6
  & 27.7 & 46.5 & 46.7 & 82.8
  & 29.1 & \underline{44.4} & 48.1 & 84.2
  & 30.0 & 46.7 & 49.3 & 83.2
  & 26.0 & 44.0 & 44.5 & 83.7 \\
Claude Opus 4.6
  & \underline{32.0} & 46.3 & \textbf{60.2} & 94.7
  & \underline{34.1} & 42.2 & \textbf{57.5} & \underline{94.0}
  & \underline{34.4} & 46.7 & \textbf{60.5} & \textbf{95.2}
  & \underline{30.8} & 41.3 & \textbf{56.8} & 93.3 \\
GPT-4o
  & 21.3 & 23.8 & 30.8 & 73.0
  & 25.9 & 25.2 & 30.6 & 76.9
  & 24.6 & 24.8 & 33.3 & 75.3
  & 21.4 & 23.9 & 26.9 & 74.0 \\

GPT-5.4
  & \textbf{36.5} & \underline{53.7} & 55.7 & \underline{95.7}
  & \textbf{38.8} & \textbf{49.6} & \underline{53.8} & \textbf{94.2}
  & \textbf{40.8} & \underline{54.4} & \underline{55.5} & \textbf{95.2}
  & \textbf{32.7} & \underline{48.3} & 53.9 & \underline{94.9} \\
\bottomrule
\end{tabularx}

\caption{Accuracy breakdowns on \textsc{CortexBench}
under chain-pair association level and temporal source settings
(bridge and comparison combined).
\textsc{Cls.}: Closed-book.
\textsc{1-S}: 1-Search.
\textsc{M-S}: Multi-Search.
\textsc{Orc.}: Oracle.}
\label{tab:bench_combined_breakdown}
\end{table*}

\subsection{Detailed Results Analysis}
\label{app:detailed_analysis}

We provide a systematic analysis of \textsc{CortexBench} results
across Table~\ref{tab:benchmark_results},
Table~\ref{tab:evidence_hit}, and
Table~\ref{tab:bench_combined_breakdown}.

\paragraph{Cross-Setting Performance Progression.}
Across all eight models, the four evaluation settings reveal a
consistent progression pattern. In the \textsc{Closed-book} setting,
bridge questions (avg 33.0\%) prove slightly more accessible than
comparison questions (avg 26.4\%), likely because bridge questions
sometimes permit partial answers from parametric knowledge about
individual entities. The \textsc{1-Search} setting produces highly
variable improvements: Gemini~3.1~Pro achieves the largest gain on
bridge (+37.6 points to 65.7\%), while GPT-4o gains only 0.3 points,
suggesting that single-round query formulation quality varies
dramatically across model families. The \textsc{Multi-Search} setting
benefits models with strong iterative planning (Gemini~3~Pro gains
24.9 points from 1-Search to reach 63.0\% on bridge), but notably
degrades Gemini~3.1~Pro by 13.2 points on bridge and 13.7 on
comparison, indicating that additional search rounds can introduce
noise when the initial retrieval was already highly effective.

\paragraph{Evidence Retrieval Analysis.}
Table~\ref{tab:evidence_hit} reveals a strong correspondence between
evidence hit rates and accuracy. Gemini~3.1~Pro achieves the highest
Hit@5 (38.4\% bridge, 46.0\% comparison) but the lowest Hit@15
(14.1\%, 12.6\%), explaining its high 1-Search accuracy but
Multi-Search degradation: its initial queries are highly effective,
but subsequent queries fail to locate additional evidence and may
introduce distracting content. Conversely, Gemini~3~Pro shows the
opposite pattern with low Hit@5 (22.5\%, 24.6\%) but the highest
Hit@15 (55.0\%, 50.9\%), explaining its large Multi-Search gains.
GPT-4o achieves relatively high Hit@5 (33.4\%, 36.6\%) but
translates this poorly into accuracy gains (+0.3 on bridge), suggesting
a reasoning bottleneck rather than a retrieval bottleneck.

\paragraph{Association Level Effects.}
Table~\ref{tab:bench_combined_breakdown} shows 
the Oracle gap differs: high-association
pairs generally yield higher Oracle accuracy (e.g., DeepSeek-V3.2:
96.0\% vs.\ 92.4\%), suggesting that high-association evidence pairs
produce more coherent cross-domain reasoning chains. The Multi-Search
advantage is generally preserved across both levels, confirming the
robustness of the benchmark design.

\paragraph{Temporal Source Effects.}
\textsc{Time\text{-}Irrel} source instances yield slightly higher
closed-book accuracy than \textsc{Time\text{-}Rel} instances across
most models (e.g., GPT-5.4: 40.8\% vs.\ 32.7\%), consistent with
time-invariant content being more likely to overlap with pre-training
data. The search-augmented settings largely close this gap (e.g.,
Claude~Opus~4.6 Multi-Search: 60.5\% TI vs.\ 56.8\% TR),
demonstrating that the retrieval mechanism effectively compensates
for the parametric knowledge deficit on time-sensitive content.
Oracle accuracy shows minimal temporal variation, confirming that
both temporal categories produce questions of comparable reasoning
difficulty once evidence is provided.

\end{document}